\definecolor{thmcolor}{rgb}{0.71,0.14,0.07}
\definecolor{democolor}{rgb}{0.15,0.24,0.55}
\definecolor{lightgray}{rgb}{0.65,0.65,0.65}
\definecolor{hypocolor}{rgb}{0.15,0.44,0.85}
\newtheorem{lemma}{Lemma}[section]
\newtheorem{proposition}{Proposition}[section]
\theoremstyle{definition}
\newtheorem{definition}{Definition}[section]
\newtheorem{example}{Example}[section]
\newtheorem{remark}{Remark}[section]
\newtheorem{assumption}{Assumption}[section]
\numberwithin{equation}{section}
\newtcolorbox{dev}{arc=0pt,colframe=lightgray!65,}
\DeclareMathAlphabet{\mathb}{OML}{cmm}{b}{it}
\def\bs#1{\ensuremath{\boldsymbol{#1}}}          
\newcommand{\N}{\mathbb{N}}		
\newcommand{\R}{\mathbb{R}}		
\newcommand{\p}{\mathrm{p}}		
\newcommand{\syst}[1]{\left \{ \begin{array}{l} #1 \end{array} \right. \kern-\nulldelimiterspace}	
\newcommand{\dom}{\text{\normalfont dom}\,}
\newcommand{\inte}{\text{\normalfont int}\,}
\begin{document}

\setcounter{page}{1}

\vspace*{1.0cm}

\title[Variable Bregman Majorization-Minimization Algorithm]{Variable Bregman Majorization-Minimization Algorithm and its Application to Dirichlet Maximum Likelihood Estimation}
\author[S. Martin, J.-C. Pesquet, G. Steidl, I. Ben Ayed]{Ségolène Martin$^{1,*}$, Jean-Christophe Pesquet$^{2}$, Gabriele Steidl$^{1}$, and Ismail Ben Ayed$^{3}$}
\maketitle
\vspace*{-0.6cm}

\begin{center}
{\footnotesize {\it

$^1$TU Berlin, Germany\\
$^2$Universit\'e Paris-Saclay, Inria, CentraleSup\'elec, CVN, France \\
$^3$\'ETS Montr\'eal, Canada

}}\end{center}

\vskip 4mm {\small\noindent {\bf Abstract.}
We propose a novel Bregman descent algorithm for minimizing a convex function that is expressed as the sum of a differentiable part (defined over an open set) and a possibly nonsmooth term. The approach, referred to as the Variable Bregman Majorization-Minimization (VBMM) algorithm, extends the Bregman Proximal Gradient method by allowing the Bregman function used in the divergence to adaptively vary at each iteration, provided it satisfies a majorizing condition on the objective function.\\
This adaptive framework enables the algorithm to approximate the objective more precisely at each iteration, thereby allowing for accelerated convergence compared to the traditional Bregman Proximal Gradient descent. We establish the convergence of the VBMM algorithm to a minimizer under mild assumptions on the family of metrics used.\\
Furthermore, we introduce a novel application of both the Bregman Proximal Gradient method and the VBMM algorithm to the estimation of the multidimensional parameters of a Dirichlet distribution through the maximization of its log-likelihood. Numerical experiments confirm that the VBMM algorithm outperforms existing approaches in terms of convergence speed.

\noindent {\bf Keywords.}
}

\renewcommand{\thefootnote}{}
\footnotetext{ $^*$Corresponding author.
\par
E-mail address: segolene.tiffany.martin@gmail.com}

\section{Introduction}
The objective of this article is to solve the classical convex minimization problem: \begin{equation}\label{eq:problem} \underset{\bs{x} \in \R^d}{\mathrm{minimize}}\;  F(\bs{x}), \end{equation} where $F = f + g$. Here, the function $f: \R^d \to \R \cup\{ {+\infty} \}$ is convex and differentiable on an open set $\mathcal{D} \subseteq \R^d$, while $g: \R^d \to \R \cup\{ {+\infty} \}$ is proper, convex, and lower semi-continuous, with $\dom g = \{\bs{x}\in \R^d\mid g(\bs{x})< +\infty\} \subseteq \mathcal{D}$.
    
When $\mathcal{D} = \R^d$ and the gradient of $f$ is $L$-Lipschitz continuous, problem \eqref{eq:problem} can be efficiently solved by the \emph{Proximal Gradient} (PG) method \cite{fukushima1981generalized, combettes2005signal}, also known as Forward Backward Splitting. Given an initial point $\bs{x}^{(0)} \in \R^d$, the PG algorithm generates a sequence $(\bs{x}^{(\ell)})_{\ell \in \N}$ by solving \begin{equation}\label{eq:proximal_gradient}
(\forall \ell \in \N)\quad
    \bs{x}^{(\ell+1)} = \arg\min_{\bs{x} \in \R^d} \left\{ g(\bs{x}) + \langle \nabla f(\bs{x}^{(\ell)}), \bs{x} - \bs{x}^{(\ell)} \rangle + \frac{1}{2\gamma_\ell} \|\bs{x} - \bs{x}^{(\ell)}\|^2 \right\},
\end{equation} where $\gamma_\ell < 2/L$.
 

 However, in many practical applications the gradient of $f$ is not Lipschitz continuous, or the domain $\mathcal{D}$ is a strict subset of $\R^d$ (see also recent conditions in \cite{perez2021enhanced}).
 This is for instance the case of minimization problems involving a Kullback-Leibler divergence such as Poisson restoration~\cite{bertero2009image} or estimation of the parameters of a distribution through a Maximum Likelihood approach in statistics \cite{joyce2011kullback}. For such cases, PG is not directly applicable.

 Bregman proximal methods address this limitation. First introduced in~\cite{censor1992proximal} and followed by~\cite{teboulle1992entropic,chen1993convergence,eckstein1993nonlinear}, this family of approaches replace the Euclidean metric in proximal schemes with a Bregman distance $D_h$ with respect to a convex and continuously differentiable function $h: \R^d \to (-\infty, +\infty]$ on its domain. We refer to \cite{benning2021bregman} for an overview of Bregman descent methods. In particular, in the \emph{Bregman Proximal Gradient} (BPG) algorithm~\cite{bauschke2017descent,van2017forward} (a.k.a. Mirror Descent algorithm when the non-differentiable term $g$ is set to zero \cite{ben2001ordered,beck2003mirror}), the $\ell$-th iterates reads
 \begin{equation}\label{eq:bregman_proximal_gradient}
    \bs{x}^{(\ell+1)} = \arg\min_{\bs{x} \in \R^d} \left\{ g(\bs{x}) + \langle \nabla f(\bs{x}^{(\ell)}), \bs{x} - \bs{x}^{(\ell)} \rangle + \frac{1}{L} D_h(\bs{x}, \bs{x}^{(\ell)}) \right\},
\end{equation}
for a fixed constant $L>0$.
This formulation generalizes the PG method in \eqref{eq:proximal_gradient}, which can be recovered by setting $h = \Vert \cdot\Vert^2/2$. 

The BPG method has proven valuable in a range of applications. In imaging, BPG has shown effectiveness in addressing Poisson noise in deblurring problems~\cite{setzer2010deblurring, tan2019inertial, kahl2023remarkable} and has recently been adapted within a Plug-and-Play framework for enhanced performance~\cite{hurault2024convergent}. In parametric statistics, BPG is useful for estimating parameters of distributions, in particular distributions within the exponential family, for which it is closely linked to the Expectation-Maximization algorithm~\cite{nemirovski2009robust, priol2021convergence, kunstner2021homeomorphic}. Moreover, constrained sampling through mirror-Langevin dynamics has also incorporated BPG techniques, illustrating its versatility~\cite{ahn2021efficient}.

 The convergence of BPG has been studied in~\cite{bauschke2017descent,teboulle2018simplified} under the Lipschitz-like condition: \begin{equation}
 Lh - f \text{ is convex on } \inte(\dom h),
 \end{equation}
 where $\inte(S)$ denotes the interior of a subset $S$ of $\R^d$.
 This condition was shown in \cite[Lemma 1]{bauschke2017descent} to be equivalent to the following \emph{majorization} property, for every $\bs{y}\in \inte(\dom h)$, \begin{equation} (\forall \bs{x} \in \inte(\dom h)) \quad f(\bs{x}) \leq f(\bs{y}) + \langle \nabla f(\bs{y}), \bs{x} - \bs{y} \rangle + L D_h(\bs{x}, \bs{y}). \end{equation}
 This condition is also sometimes called an $L$-relative smoothness property of $f$ with respect to
 function $h$.
 Under this property, BPG can be interpreted as a \emph{Majorization-Minimization} (MM) method with the following update rule: \begin{equation}\label{eq:MM_method}
    \bs{x}^{(\ell+1)} = \arg\min_{\bs{x} \in \R^d} q_{\bs{x}^{(\ell)}}(\bs{x}),
\end{equation} where the surrogate function $q$ is defined by \begin{equation}\label{eq:Bregman_surrogate}
    (\forall \bs{x}\in \inte(\dom h)) \quad q_{\bs{y}}(\bs{x})= g(\bs{x}) +f(\bs{y}) + \langle \nabla f(\bs{y}), \bs{x} - \bs{y} \rangle + L D_h(\bs{x}, \bs{y}).
\end{equation}

    In this paper, we introduce an extended version of the BPG framework that allows the Bregman metric $D_h$ in \eqref{eq:bregman_proximal_gradient} to vary at each iteration, adapting to the current iterate. We refer to this adaptive approach as the Variable Bregman Majorization-Minimization (VBMM) algorithm.
    By adapting the Bregman divergence to the local geometry at each iteration, we aim to achieve more accurate approximation to the objective $F$ and to accelerate convergence.

    The Bregman proximal gradient approach with variable Bregman metric has been relatively unexplored. 
    It has been studied in the context of monotone operator theory in \cite{van2017forward} and \cite{bui2021bregman}, where decaying conditions on the Bregman divergences are imposed. Our approach differs by not requiring such conditions. Additionally, \cite{ochs2019non} analyzed the convergence of a generalized majorization-minimization framework, of which VBMM is a special case. Their convergence theory, however, depends on a ``growth condition'' formulated as
    \begin{equation}
        |F(\bs{x}) - q_{\bs{y}}(\bs{x})| \leq \omega(\Vert\bs{x}-\bs{y}\Vert)
    \end{equation}
    where $\omega: [0, +\infty)\longrightarrow [0, +\infty)$ is a differentiable function such that $\omega(0) = 0$, $w'>0$, and $\lim_{t \to +\infty} \omega'(t) = \lim_{t \to +\infty} \omega(t)/\omega'(t)=0$. In practice, however, it may be challenging to identify a suitable function $\omega$ that fulfills these conditions, limiting the practical use of this approach for some applications. Finally, the recent work \cite{rossignol2022bregman} proposed a variable Bregman proximal gradient approach for a Poisson deconvolution problem and showed that using a variable metric can accelerate convergence. However, this work did not provide any convergence analysis.
    
    The paper is organized as follows. In Section \ref{sec:problem_and_algorithm}, we introduce the Variable Bregman Majoriza\-tion-Minimization algorithm. In Section \ref{sec:convergence_analysis}, we provide a convergence analysis that relies on mild assumptions on the Bregman metric, enhancing the practical utility of the method. Finally, Section~\ref{sec:application} is dedicated to an original application of the VBMM algorithm to the Maximum Likelihood estimation of the parameters of a Dirichlet distribution. In particular, we prove that minimization problem admits a solution, derive closed form iterates for the VBMM algorithm, and provide numerical experiments on synthetic data to illustrate the good performance of the algorithm.

    Throughout the article, $\partial \phi$ denotes the Moreau subidfferential of a convex function $\phi\colon \R^d \longrightarrow \R\cup \{+\infty\}$.

\section{Problem and algorithm}\label{sec:problem_and_algorithm}

Let $F\colon \R^d \longrightarrow \R \cup \{+\infty\}$ be defined as
\begin{equation}
    F = f + g,
\end{equation}
where $f\colon \R^d \longrightarrow \R \cup \{+\infty\}$ and $g\colon \R^d \longrightarrow \R \cup \{+\infty\}$ are functions satisfying the following assumption:

\begin{assumption}\label{assumption:f_and_g}~
\begin{enumerate}[label=(\roman*)]
    \item $f$ is convex and 
    differentiable on an open set $\mathcal{D} \subseteq \R^d$,
    \item $g$ is proper, convex, and lower semi-continuous with $\dom g \subseteq \mathcal{D}$,
    \item $F$ is lower bounded.
\end{enumerate}
\end{assumption}
Our objective is to solve the classical convex minimization problem
\begin{equation}\label{eq:main_minimization_pb}
    \underset{\bs{x} \in \R^d}{\mathrm{minimize}}\; F(\bs{x}).
\end{equation}
In particular, if $g$ is the indicator function of a nonempty closed convex
set $\mathcal{C}\subset \mathcal{D}$, the following constrained optimization
problem is obtained:
\begin{equation}\label{eq:main_minimization_pb_constraint}
    \underset{\bs{x} \in \mathcal{C}}{\mathrm{minimize}}\; f(\bs{x}).
\end{equation}

A central mathematical tool in our methodology is the Bregman divergence, originally introduced in \cite{bregman1967relaxation}.

\begin{definition}[Bregman Divergence]
Consider a function $h: \R^d \rightarrow (-\infty, +\infty]$ that is 
differentiable on $\inte(\dom h)$. The Bregman divergence associated with $h$ is defined, for every pair $(\bs{x}, \bs{y})$ in $\dom h \times \inte (\dom h)$, as
\begin{equation}
D_h(\bs{x}, \bs{y}) \coloneqq h(\bs{x}) - h(\bs{y}) - \langle \nabla h(\bs{y}), \bs{x} - \bs{y} \rangle.
\end{equation}
\end{definition}
Essentially, $D_h(\bs{x}, \bs{y})$ quantifies the difference between the value of the function $h$ at a point $\bs{x} \in \dom h$ and its linear approximation around $\bs{x}$, calculated at $\bs{y} \in \inte (\dom h)$.
If $h$ is convex on $\dom h$, then $D_h(\bs{x}, \bs{y})\ge 0$ and $D_h(\cdot, \bs{y})$ is convex. If $h$ is strictly convex on  $\inte ( \dom h)$ and $\bs{x} \in \inte (\dom h)$,
then $D_h(\bs{x}, \bs{y}) = 0$ if and only if $\bs{x} = \bs{y}$. Furthermore,
$D_h(\cdot, \bs{y})$ is strictly convex.
Note that
\begin{equation} \label{starstar}
\nabla_{\bs{x}} D_h(\bs{x},\bs{y}) = \nabla h(\bs{x}) - \nabla h(\bs{y}).
\end{equation}


\begin{definition}[Bregman Majorant Function]\label{de:Bregmaj}
For a given $\bs{y} \in \mathcal{D}$, let $h_{\bs{y}}\colon \R^d \longrightarrow \R \cup \{+\infty\}$ be a lower semi-continuous, 
strictly convex function, which is
differentiable on $\inte (\dom h_{\bs{y}})\supseteq \mathcal{D}$. 
Let
 $q_{\bs{y}}\colon \mathcal{D} \longrightarrow \R$ be defined as 
\begin{equation}\label{eq:bregman_majorant}
q_{\bs{y}}(\bs{x}) \coloneqq f(\bs{y}) + \langle \nabla f(\bs{y}),\bs{x} - \bs{y}\rangle + D_{h_{\bs{y}}}(\bs{x}, \bs{y}).
\end{equation}
We call $q_{\bs{y}}$ a Bregman tangent majorant of $f$ at $\bs{y}$ associated with $h_{\bs{y}}$ if, for every $x \in \mathcal D$,
\begin{equation}\label{eq:bregman_majoran_1t}
       f(\bs{x}) \leq q_{\bs{y}}(\bs{x})
 \end{equation}
 or, in other words,
 $$
 D_f(\bs{x},\bs{y}) \le D_{h_{\bs{y}}}(\bs{x}, \bs{y}).
 $$
\end{definition}

The definition expands upon concepts previously introduced as those in \cite{teboulle2018simplified, bauschke2017descent}. A distinctive aspect of our approach is the flexibility, in the \emph{Majorization-Minimization} (MM) strategy, to allow the Bregman metric to vary according to the reference point $\bs{y}$. 

Let $\left( h_{\bs{y}} \right)_{y \in \mathcal D}$ be a family of function such that, for every $\bs{y} \in \mathcal{D}$, $ h_{\bs{y}} \colon \R^d \longrightarrow \R \cup \{+\infty\}$.  We make the following assumption.
\begin{assumption}\label{assumption:bregman_functions}
For $f$ and $g$ as in Assumption \ref{assumption:f_and_g}, 
the family of functions $\left( h_{\bs{y}} \right)_{y \in \mathcal D}$ fulfills the conditions: for every $\bs{y} \in \mathcal{D}$,
\begin{enumerate}[label=(\roman*)]
    \item\label{assumption:bregman_functionsi} $h_{\bs{y}}$ is a lower semi-continuous
    function which is  strictly convex and
differentiable on $\inte (\dom h_{\bs{y}})$, and $\mathcal{D} \subseteq \inte(\dom h_{\bs{y}})$.
\item\label{assumption:bregman_functionsii} The function
\begin{equation}\label{e:coercivecond}
\bs{x} \mapsto g(\bs{x})
   + \langle \nabla f(\bs{y}),\bs{x}-\bs{y}\rangle +  D_{h_{\bs{y}}}(\bs{x}, \bs{y})
\end{equation}
is coercive.
\item The majorizing property \eqref{eq:bregman_majoran_1t} is satisfied.
\end{enumerate}
\end{assumption}

\begin{remark}
A  sufficient condition for Assumption \ref{assumption:bregman_functions}\ref{assumption:bregman_functionsi}-\ref{assumption:bregman_functionsii} to be satisfied is that 
$h_{\bs{y}}$, $\bs{y} \in \mathcal D$, are lower-semicontinuous  
Legendre functions with $\mathcal{D} \subseteq 
\inte (\dom h_{\bs{y}})$
and the functions 
\begin{equation}
\varphi_{\bs{y}}: \bs{x} \mapsto g(\bs{x}) +
\langle \nabla f(\bs{y}),\bs{x} - \bs{y}\rangle, \quad 
\bs{y} \in \mathcal D
\end{equation}
are lower bounded. This can be seen as follows:
recall that $h_{\bs{y}}$ is a Legendre function if it is 
essentially strictly convex and essentially smooth.
The first property implies that $h_{\bs{y}}$ is strictly convex on $\inte (\dom h_{\bs{y}})\subseteq \dom \partial h_{\bs{y}}$.
As shown in \cite[Thm. 3.7]{bauschke1997legendre},
$D_{h_{\bs{y}}}(\cdot, \bs{y})$ is coercive because $h_{\bs{y}}$ is essentially strictly convex. Since $\varphi_{\bs{y}}$ is lower-bounded, this implies that \eqref{e:coercivecond} is coercive.
\end{remark}

\begin{example}
  A useful example of metrics $(h_{\bs{y}})_{\bs{y} \in \mathcal{D}}$ leading to separable Bregman majorants, and thus easy to minimize, can be defined as follows. For every $\bs{y} \in \mathcal{D}$, let 
    \begin{equation}\label{eq:separable_form_bregman}
    (\forall \bs{x} = (x_i)_{1\le i \le d} \in \R^d)\quad 
       h_{\bs{y}}(\bs{x}) \colon = \sum_{i=1}^d a_i(\bs{y}) \nu_i(x_i) + \sum_{i=1}^d b_i(\bs{y}) \frac{x_i^2}{2},
    \end{equation}
    where, for every  $i \in \{1, \dots, d\}$, the functions $a_i, b_i \colon \R^d \longrightarrow [0, +\infty)$  are continuous, and $\nu_i\colon \R \longrightarrow \R \cup \{+\infty\}$ is convex and 
    differentiable on the interior of its domain  with $\mathcal{D} \subseteq \inte \big(\bigtimes_{j=1}^d \dom {\nu}_j \big)$. Note that the convexity of the differentiable function $\nu_i$ implies that $\nabla \nu_i$ is continuous, see \cite[Corollary 17.42]{bauschke2017convex}.
Functions of the form \eqref{eq:separable_form_bregman} are in particular useful for representing quadratic majorants with a diagonal curvature matrix when setting $a_i = 0$ for all $i \in \{1, \dots, d\}$. This form is also well-suited for the maximum likelihood Dirichlet problem that will be discussed in the application section.
\end{example}

The Variable Bregman Majorization-Minimization (VBMM) algorithm aims at minimizing the sum of the non-smooth function $g$ and 
a Bregman majorant of $f$ calculated from the previous iterate 
$\bs{x}^{(\ell)}$, i.e., 
finding for $\bs{y}= \bs{x}^{(\ell)}$ the minimizer of
\begin{align*}
\Phi_{\bs{y}}(\bs{x}) &\coloneqq
g(\bs{x}) +  q_{\bs{y}}(\bs{x})
\\
&=
g(\bs{x}) +   f(\bs{y}) + \langle \nabla f (\bs{y}), \bs{x}-\bs{y}\rangle +h_{\bs{y}} ( \bs{x}) - h_{\bs{y}}( \bs{y}) 
- \langle \nabla h_{\bs{y}}(\bs{y} ), \bs{x} - \bs{y}\rangle
\\
&=
g(\bs{x}) + h_{\bs{y}}( \bs{x})
+ \langle \nabla f (\bs{y})- \nabla h_{\bs{y}}(\bs{y} ), \bs{x} \rangle
- \langle \nabla f (\bs{y})- \nabla h_{\bs{y}}(\bs{y} ), \bs{y} \rangle
+  f(\bs{y}).
\end{align*}
Neglecting the constants yields Algorithm \ref{algo:VBMM}.
\medskip

\RestyleAlgo{ruled}
\begin{algorithm}[H]
\textbf{Input:} Initial point $\bs{x}^{(0)} \in \mathcal{D}$

\For{$\ell=0, 1, \dots$}{
    $\bs{x}^{(\ell+1)} 
=\arg\min_{\bs{x} \in \R^d} \; 
g(\bs{x}) +
\langle \nabla f(\bs{x}^{(\ell)}),\bs{x} \rangle + D_{h_{\bs{x}^{(\ell)}}}(\bs{x}, \bs{x}^{(\ell)}) $
}
\caption{VBMM Algorithm \label{algo:VBMM}}
\end{algorithm}

\begin{proposition}
Under Assumptions \ref{assumption:f_and_g} and \ref{assumption:bregman_functions},
    Algorithm \ref{algo:VBMM} is well-defined
    and generates a sequence $(\bs{x}^{(\ell)})_{\ell \ge 1}$
    in $\dom g$.
\end{proposition}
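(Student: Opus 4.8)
The plan is to prove, by induction on $\ell$, that at each step the minimization defining $\bs{x}^{(\ell+1)}$ admits a unique solution and that this solution lies in $\dom g \subseteq \mathcal{D}$, so the next iteration is again meaningful. Fix $\ell$ and assume $\bs{x}^{(\ell)} \in \mathcal{D}$ (true for $\ell=0$ by hypothesis, so that $h_{\bs{x}^{(\ell)}}$ belongs to the family from Assumption~\ref{assumption:bregman_functions}). Write $\bs{y} = \bs{x}^{(\ell)}$ and consider the objective minimized in Algorithm~\ref{algo:VBMM},
\[
\psi_{\bs{y}}(\bs{x}) = g(\bs{x}) + \langle \nabla f(\bs{y}), \bs{x}\rangle + D_{h_{\bs{y}}}(\bs{x}, \bs{y}),
\]
which differs from the function in \eqref{e:coercivecond} only by the additive constant $-\langle \nabla f(\bs{y}), \bs{y}\rangle$.

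I would then check four properties of $\psi_{\bs{y}}$. \emph{Proper:} $g$ is proper by Assumption~\ref{assumption:f_and_g}(ii) and $\dom g \subseteq \mathcal{D} \subseteq \inte(\dom h_{\bs{y}})$ by Assumption~\ref{assumption:bregman_functions}\ref{assumption:bregman_functionsi}, so all three terms are finite on the nonempty set $\dom g$; hence $\dom \psi_{\bs{y}} = \dom g \neq \emptyset$. \emph{Lower semi-continuous:} $g$ and $h_{\bs{y}}$ are lsc (Assumptions~\ref{assumption:f_and_g}(ii) and \ref{assumption:bregman_functions}\ref{assumption:bregman_functionsi}), the term $\langle \nabla f(\bs{y}), \cdot\rangle$ and the affine correction $-\langle \nabla h_{\bs{y}}(\bs{y}), \cdot - \bs{y}\rangle$ hidden in $D_{h_{\bs{y}}}(\cdot,\bs{y})$ are continuous, and a sum of lsc and continuous functions is lsc. \emph{Coercive:} this is exactly Assumption~\ref{assumption:bregman_functions}\ref{assumption:bregman_functionsii}, up to the additive constant. \emph{Convex, strictly convex on $\dom \psi_{\bs{y}}$:} $g$ is convex, $\langle \nabla f(\bs{y}), \cdot\rangle$ is affine, and $D_{h_{\bs{y}}}(\cdot, \bs{y})$ is convex on $\R^d$ and strictly convex on $\inte(\dom h_{\bs{y}})$ because $h_{\bs{y}}$ is strictly convex there; since $\dom \psi_{\bs{y}} = \dom g \subseteq \inte(\dom h_{\bs{y}})$, the sum $\psi_{\bs{y}}$ is strictly convex wherever it is finite.

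Properness, lower semi-continuity, and coercivity imply via the direct method (see, e.g., \cite{bauschke2017convex}) that $\psi_{\bs{y}}$ attains its infimum; a convex function that is strictly convex on its effective domain has at most one minimizer (the midpoint of two distinct minimizers would lie in $\dom \psi_{\bs{y}}$ and strictly beat the minimum). Hence $\bs{x}^{(\ell+1)} = \arg\min_{\bs{x}} \psi_{\bs{y}}(\bs{x})$ is well-defined. Since $\psi_{\bs{y}}(\bs{x}^{(\ell+1)}) < +\infty$ forces $g(\bs{x}^{(\ell+1)}) < +\infty$, we obtain $\bs{x}^{(\ell+1)} \in \dom g \subseteq \mathcal{D}$ by Assumption~\ref{assumption:f_and_g}(ii), which closes the induction and shows $(\bs{x}^{(\ell)})_{\ell \ge 1} \subseteq \dom g$.

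The only mildly delicate point is the uniqueness step: $D_{h_{\bs{y}}}(\cdot, \bs{y})$ is strictly convex only on $\inte(\dom h_{\bs{y}})$, so one must invoke the inclusion $\dom g \subseteq \inte(\dom h_{\bs{y}})$ to confine the strict-convexity argument to the region where the objective is finite; everything else is routine bookkeeping with Assumptions~\ref{assumption:f_and_g} and \ref{assumption:bregman_functions}.
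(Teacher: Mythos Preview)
Your proof is correct and follows essentially the same approach as the paper: both argue by induction that the objective at each step is proper, lower semi-continuous, coercive, and strictly convex on its effective domain (via $\dom g \subseteq \mathcal{D} \subseteq \inte(\dom h_{\bs{y}})$), hence admits a unique minimizer in $\dom g$. Your version is slightly more explicit about the lsc and uniqueness verifications, but the structure and key ingredients are identical.
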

\begin{proof}
For  $\bs{x}^{(\ell)} \in \mathcal{D}$, the function  
\begin{equation}
\bs{x}\mapsto 
\langle \nabla f(\bs{x}^{(\ell)}),\bs{x} - \bs{x}^{(\ell)}\rangle + D_{h_{\bs{x}^{(\ell)}}}(\bs{x}, \bs{x}^{(\ell)}) + g(\bs{x})
\end{equation}
is well-defined since $\mathcal{D} \subseteq \inte (\dom h_{\bs{x}^{(\ell)}})$.
This function is proper since $\varnothing \neq \dom g \subseteq \mathcal{D}$.
In addition, the lower-semicontinuity and coercivity properties
in Assumption \ref{assumption:bregman_functions}
implies the existence of a minimizer $\bs{x}^{(\ell+1)}$ of this function,
which necessarily belongs to $\dom g$. This minimizer is unique
since $h_{\bs{x}^{(\ell)}}$ and thus $D_{h_{\bs{x}^{(\ell)}}}(\cdot, \bs{x}^{(\ell)})$ are strictly convex on $\mathcal{D}$.
The result is then shown by induction.
\end{proof}

\begin{remark}
    This strategy of adapting the metric to the current iterate in non-Bregman MM schemes has already been widely studied both theoretically and numerically for the case of quadratic majorants \cite{geman1992constrained,chouzenoux2013majorize,verdun2024fast}. In particular, our Variable Bregman Majorization-Minimization algorithm can be seen as a generalization of the Variable Metric Forward-Backward (VMFB) algorithm \cite{lee2012proximal,combettes2014variable,chouzenoux2014variable} which considers quadratic majorants of the form
    \begin{equation}
        (\forall \bs{x}\in \R^d) \quad q(\bs{x}; \bs{y})=  
        f(\bs{y}) + \langle \nabla f(\bs{y}), \bs{x} - \bs{y} \rangle + 
    \frac{1}{2} \langle \bs{x} - \bs{y}, \bs{A}_{\bs{y}}(\bs{x} - \bs{y}) \rangle,
    \end{equation}
    with $\bs{A}_{\bs{y}}$ a positive definite matrix that depends on the point $\bs{y}$. 
\end{remark}

\begin{remark}
For every $\ell \in \N$,
$\bs{x}^{(\ell+1)}$ is actually the $D_{h_{\bs{x}^{(\ell)}}}$-proximity operator of function $g$
at $\nabla h_{\bs{x}^{(\ell)}}(\bs{x}^{(\ell)})- \nabla f(\bs{x}^{(\ell)})$. 
Algorithm \ref{algo:VBMM} is thus a
variable metric forward-backward splitting method based on a Bregman divergence \cite{bui2021bregman,van2017forward}. The main
difference with the analysis conducted in \cite{bui2021bregman}, in the general context of monotone operator theory, is that we do not
assume that the Bregman divergences $(D_{h_{\bs{x}^{(\ell)}}})_{\ell \in \N}$
satisfy a decaying condition like in \cite[Algorithme 2.4b]{bui2021bregman} (see also \cite[Definition 2.2]{van2017forward}), but we rely
on a Majorization-Minimization property.
When $g=0$, the algorithm simplifies to
\begin{equation}
(\forall \ell \in \N)\quad
\begin{cases}
\widetilde{\bs{z}}^{(\ell)} =
\nabla h_{\bs{x}^{(\ell)}}(\bs{x}^{(\ell)})\\
\bs{z}^{(\ell+1)} =
\widetilde{\bs{z}}^{(\ell)}-\nabla f(\bs{x}^{(\ell)})\\
\bs{x}^{(\ell+1)} = (\nabla h_{\bs{x}^{(\ell)}})^{-1}(\bs{z}^{(\ell+1)}).
\end{cases}
\end{equation}
This shows that the mirror map is modified along the gradient steps based on the current iterate.
\end{remark}

\section{Convergence analysis}\label{sec:convergence_analysis}

Our convergence result is stated in Proposition~\ref{prop:subsequential_conv}. It shows that, if the sequence $(\bs{x}^{(\ell)})_{\ell \in \mathbb{N}}$ generated by Algorithm~\ref{algo:VBMM} lies in a compact subset of $\mathcal{D}$, then subsequential convergence towards a minimizer of $F$ is achieved 
under suitable assumptions on the Bregman functions. 
The existence of such compact set
can be addressed independently. For instance, a sufficient condition is that the objective function is coercive on $\mathcal{D}$. This can be ensured by a proper choice of function $g$.

We start proving that, without further assumptions, Algorithm \ref{algo:VBMM} produces a non increasing sequence of cost values. First, we recall the \emph{Three Points Identity}, which can be found, e.g., in \cite[Lemma 3.1]{chen1993convergence}.

\begin{lemma}[Three Points Identity]\label{lemma:three_points} 
Let $h \colon \R^d \longrightarrow \R \cup \{+\infty\}$ be 
strictly convex and
differentiable on $\inte (\dom h)$. Then, for any three points $\bs{x}, \bs{y}$ in $\inte (\dom h)$ and $\bs{z} \in \dom h$, it holds
\begin{equation}
    D_h(\bs{z}, \bs{x}) - D_h(\bs{z}, \bs{y}) -D_h(\bs{y}, \bs{x}) = \langle \nabla h (\bs{x}) -\nabla h (\bs{y}) , \bs{y} - \bs{z} \rangle.
\end{equation}
\end{lemma}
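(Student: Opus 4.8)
The plan is to prove this by a direct expansion of the three Bregman divergences and tracking the cancellations; no inequality or convexity argument is needed, only the algebraic structure of $D_h$. First I would note that the expression is well-posed: since $\bs{x},\bs{y}\in\inte(\dom h)$, the gradients $\nabla h(\bs{x})$ and $\nabla h(\bs{y})$ exist, and because $\bs{z}\in\dom h$ while $\bs{x},\bs{y}\in\inte(\dom h)$, each of $D_h(\bs{z},\bs{x})$, $D_h(\bs{z},\bs{y})$, $D_h(\bs{y},\bs{x})$ lies in the admissible pattern $\dom h\times\inte(\dom h)$ from the definition of the Bregman divergence.

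Next I would substitute $D_h(\bs{u},\bs{v})=h(\bs{u})-h(\bs{v})-\langle\nabla h(\bs{v}),\bs{u}-\bs{v}\rangle$ for each of the three terms on the left-hand side. Writing them out, the six function-value contributions are $+h(\bs{z})-h(\bs{x})$, $-h(\bs{z})+h(\bs{y})$, and $-h(\bs{y})+h(\bs{x})$, which sum to zero, so all $h(\cdot)$ terms disappear. What remains is
\[
-\langle\nabla h(\bs{x}),\bs{z}-\bs{x}\rangle+\langle\nabla h(\bs{y}),\bs{z}-\bs{y}\rangle+\langle\nabla h(\bs{x}),\bs{y}-\bs{x}\rangle .
\]
Grouping by gradient, the coefficient of $\nabla h(\bs{x})$ is $-(\bs{z}-\bs{x})+(\bs{y}-\bs{x})=\bs{y}-\bs{z}$, and the coefficient of $\nabla h(\bs{y})$ is $\bs{z}-\bs{y}=-(\bs{y}-\bs{z})$. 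Combining gives exactly $\langle\nabla h(\bs{x})-\nabla h(\bs{y}),\bs{y}-\bs{z}\rangle$, which is the claimed identity.

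I do not anticipate any real obstacle here: the statement is a formal identity and the only point requiring mild care is the sign bookkeeping in the three linear terms. It is worth remarking that strict convexity of $h$ is not used in the identity itself — it is invoked elsewhere to guarantee uniqueness of the VBMM iterates — and that the identity holds for any differentiable $h$ under the stated domain conditions.
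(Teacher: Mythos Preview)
Your proof is correct: the Three Points Identity is a purely algebraic fact, and your direct expansion with careful sign tracking establishes it cleanly. The paper does not supply its own proof but simply cites \cite{chen1993convergence}; your argument is exactly the standard one and your side remark that strict convexity plays no role in the identity itself is also accurate.
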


\begin{lemma}\label{lemma:decrease}
    Suppose that Assumptions \ref{assumption:f_and_g} and \ref{assumption:bregman_functions} are satisfied. Let $(\bs{x}^{(\ell)})_{\ell \in \N}$ be the sequence generated by VBMM Algorithm \ref{algo:VBMM}. Then, for every $\ell \in \N$ and $\bs{v} \in \mathcal{D}$, we have
    \begin{equation}
        F(\bs{x}^{(\ell +1)}) -F(\bs{v}) \leq D_{h_{\bs{x}^{(\ell)}}}(\bs{v}, \bs{x}^{(\ell)}) - D_{h_{\bs{x}^{(\ell)}}}(\bs{v}, \bs{x}^{(\ell +1)})  - D_{f}(\bs{v}, \bs{x}^{(\ell)}).
    \end{equation}
In addition, sequence $(F(\bs{x}^{(\ell)}))_{\ell \in \N}$ is non-increasing.
\end{lemma}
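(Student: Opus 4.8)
The plan is to establish the displayed inequality for arbitrary $\bs{v}\in\mathcal{D}$ first, and then obtain the monotonicity statement as a corollary by choosing $\bs{v}=\bs{x}^{(\ell)}$. The starting point is the optimality of $\bs{x}^{(\ell+1)}$ for the strictly convex minimization subproblem defining the VBMM step. Writing $h=h_{\bs{x}^{(\ell)}}$ for brevity, the first-order optimality condition at $\bs{x}^{(\ell+1)}$ reads
\begin{equation}
0 \in \partial g(\bs{x}^{(\ell+1)}) + \nabla f(\bs{x}^{(\ell)}) + \nabla h(\bs{x}^{(\ell+1)}) - \nabla h(\bs{x}^{(\ell)}),
\end{equation}
using \eqref{starstar} for the gradient of $D_h(\cdot,\bs{x}^{(\ell)})$. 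Hence there is a subgradient $\bs{s}\in\partial g(\bs{x}^{(\ell+1)})$ with $\bs{s} = \nabla h(\bs{x}^{(\ell)}) - \nabla h(\bs{x}^{(\ell+1)}) - \nabla f(\bs{x}^{(\ell)})$, and by the subgradient inequality, for every $\bs{v}\in\mathcal{D}$,
\begin{equation}
g(\bs{x}^{(\ell+1)}) - g(\bs{v}) \le \langle \bs{s},\, \bs{x}^{(\ell+1)} - \bs{v}\rangle = \langle \nabla h(\bs{x}^{(\ell)}) - \nabla h(\bs{x}^{(\ell+1)}) - \nabla f(\bs{x}^{(\ell)}),\, \bs{x}^{(\ell+1)} - \bs{v}\rangle.
\end{equation}

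Next I would rewrite the inner-product term involving the two gradients of $h$ using the Three Points Identity (Lemma~\ref{lemma:three_points}) applied with $\bs{x}=\bs{x}^{(\ell)}$, $\bs{y}=\bs{x}^{(\ell+1)}$, $\bs{z}=\bs{v}$: this yields
\begin{equation}
\langle \nabla h(\bs{x}^{(\ell)}) - \nabla h(\bs{x}^{(\ell+1)}),\, \bs{x}^{(\ell+1)} - \bs{v}\rangle = D_h(\bs{v},\bs{x}^{(\ell)}) - D_h(\bs{v},\bs{x}^{(\ell+1)}) - D_h(\bs{x}^{(\ell+1)},\bs{x}^{(\ell)}).
\end{equation}
For the remaining term $\langle -\nabla f(\bs{x}^{(\ell)}),\, \bs{x}^{(\ell+1)}-\bs{v}\rangle$, I would split it as $\langle \nabla f(\bs{x}^{(\ell)}),\bs{v}-\bs{x}^{(\ell)}\rangle + \langle \nabla f(\bs{x}^{(\ell)}),\bs{x}^{(\ell)}-\bs{x}^{(\ell+1)}\rangle$, recognize the first piece via $f(\bs{v}) = f(\bs{x}^{(\ell)}) + \langle\nabla f(\bs{x}^{(\ell)}),\bs{v}-\bs{x}^{(\ell)}\rangle + D_f(\bs{v},\bs{x}^{(\ell)})$, and absorb the second piece together with the majorization bound. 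Concretely, the majorizing property \eqref{eq:bregman_majoran_1t} at $\bs{y}=\bs{x}^{(\ell)}$ gives
\begin{equation}
f(\bs{x}^{(\ell+1)}) \le f(\bs{x}^{(\ell)}) + \langle\nabla f(\bs{x}^{(\ell)}),\bs{x}^{(\ell+1)}-\bs{x}^{(\ell)}\rangle + D_h(\bs{x}^{(\ell+1)},\bs{x}^{(\ell)}),
\end{equation}
so that $\langle\nabla f(\bs{x}^{(\ell)}),\bs{x}^{(\ell)}-\bs{x}^{(\ell+1)}\rangle + D_h(\bs{x}^{(\ell+1)},\bs{x}^{(\ell)})$ dominates $f(\bs{x}^{(\ell+1)}) - f(\bs{x}^{(\ell)})$. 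Assembling all pieces: $F(\bs{x}^{(\ell+1)}) - F(\bs{v}) = [g(\bs{x}^{(\ell+1)})-g(\bs{v})] + [f(\bs{x}^{(\ell+1)}) - f(\bs{v})]$, and substituting the bounds above, the $f(\bs{x}^{(\ell)})$ terms and the $D_h(\bs{x}^{(\ell+1)},\bs{x}^{(\ell)})$ terms cancel, leaving exactly
\begin{equation}
F(\bs{x}^{(\ell+1)}) - F(\bs{v}) \le D_h(\bs{v},\bs{x}^{(\ell)}) - D_h(\bs{v},\bs{x}^{(\ell+1)}) - D_f(\bs{v},\bs{x}^{(\ell)}),
\end{equation}
which is the claimed inequality.

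Finally, for the monotonicity I would set $\bs{v}=\bs{x}^{(\ell)}$: then $D_h(\bs{v},\bs{x}^{(\ell)}) = D_{h_{\bs{x}^{(\ell)}}}(\bs{x}^{(\ell)},\bs{x}^{(\ell)}) = 0$ and $D_f(\bs{x}^{(\ell)},\bs{x}^{(\ell)})=0$, while $D_h(\bs{x}^{(\ell)},\bs{x}^{(\ell+1)})\ge 0$ since $h_{\bs{x}^{(\ell)}}$ is convex; hence $F(\bs{x}^{(\ell+1)}) \le F(\bs{x}^{(\ell)})$. The main technical point to handle carefully is the validity of the subdifferential sum rule (and hence of the first-order optimality condition) for the subproblem — this requires that $D_h(\cdot,\bs{x}^{(\ell)})$ be finite and differentiable on a neighborhood meeting $\dom g$, which is guaranteed by Assumption~\ref{assumption:bregman_functions}\ref{assumption:bregman_functionsi} ($\mathcal{D}\subseteq\inte(\dom h_{\bs{x}^{(\ell)}})$ and $\dom g\subseteq\mathcal{D}$), so the continuity of $\nabla h_{\bs{x}^{(\ell)}}$ on the open set $\mathcal{D}$ supplies the qualification condition. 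A minor bookkeeping subtlety is checking that all points at which Bregman divergences are evaluated lie in the appropriate domains ($\bs{x}^{(\ell)},\bs{x}^{(\ell+1)}\in\dom g\subseteq\mathcal{D}\subseteq\inte(\dom h_{\bs{x}^{(\ell)}})$ and $\bs{v}\in\mathcal{D}$), which is exactly what makes Lemma~\ref{lemma:three_points} applicable here.
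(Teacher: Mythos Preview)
Your proof is correct and follows essentially the same route as the paper's: first-order optimality of the subproblem yields a subgradient of $g$, the subgradient inequality combined with the Three Points Identity (Lemma~\ref{lemma:three_points}) handles the $h$-terms, the majorization property \eqref{eq:bregman_majoran_1t} supplies the bound on $f(\bs{x}^{(\ell+1)})$, and setting $\bs{v}=\bs{x}^{(\ell)}$ gives monotonicity. Your additional remarks on the subdifferential sum rule and the domain checks for Lemma~\ref{lemma:three_points} are a welcome bit of rigor that the paper leaves implicit.
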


\begin{proof}
    Let $\bs{v} \in \mathcal{D}$ and $\ell \in \N$. 
By definition of $\bs{x}^{(\ell+1)}$ and \eqref{starstar}, there exists $\bs{w}^{(\ell +1)} \in \partial g (\bs{x}^{(\ell +1)})$ such that
\begin{equation}
    \nabla f(\bs{x}^{(\ell)}) = \nabla h_{\bs{x}^{(\ell)}}(\bs{x}^{(\ell)}) - \nabla h_{\bs{x}^{(\ell)}}(\bs{x}^{(\ell+1)}) - \bs{w}^{(\ell +1)}.
\end{equation}
Using the fact that  $\bs{w}^{(\ell +1)}$ is a subgradient of  $g$ at $\bs{x}^{(\ell +1)}$, we obtain
\begin{align}
   \langle \nabla f(\bs{x}^{(\ell)}), \bs{x}^{(\ell +1)} - \bs{v}\rangle &= \langle \nabla h_{\bs{x}^{(\ell)}}(\bs{x}^{(\ell)}) - \nabla h_{\bs{x}^{(\ell)}}(\bs{x}^{(\ell+1)}), \bs{x}^{(\ell +1)} - \bs{v} \rangle - \langle \bs{w}^{(\ell +1)}, \bs{x}^{(\ell +1)} - \bs{v} \rangle \nonumber\\
    &\leq \langle \nabla h_{\bs{x}^{(\ell)}}(\bs{x}^{(\ell)}) - \nabla h_{\bs{x}^{(\ell)}}(\bs{x}^{(\ell+1)}), \bs{x}^{(\ell +1)} - \bs{v} \rangle + g(\bs{v}) - g(\bs{x}^{(\ell+1)}).
\end{align}
Furthermore, it follows from Lemma \ref{lemma:three_points} with $\bs{x}=\bs{x}^{(\ell)} $, $\bs{y} = \bs{x}^{(\ell+1)}$, and $\bs{z} =\bs{v}$ that
\begin{align}
    \langle \nabla f(\bs{x}^{(\ell)}),\bs{x}^{(\ell +1)} - \bs{v}\rangle 
    &\leq  D_{h_{\bs{x}^{(\ell)}}}(\bs{v}, \bs{x}^{(\ell)}) - D_{h_{\bs{x}^{(\ell)}}}(\bs{v}, \bs{x}^{(\ell +1)}) - D_{h_{\bs{x}^{(\ell)}}}(\bs{x}^{(\ell +1)}, \bs{x}^{(\ell)}) \\
    &+ g(\bs{v}) - g(\bs{x}^{(\ell+1)}). \label{eq:identity_1}
\end{align}
On the other hand, the majorization property at iteration $\ell$ reads as
\begin{equation}
    f(\bs{x}^{(\ell+1)}) \leq f(\bs{x}^{(\ell)}) + \langle\nabla f(\bs{x}^{(\ell)}),\bs{x}^{(\ell+1)} - \bs{x}^{(\ell)}\rangle+ D_{h_{\bs{x}^{(\ell)}}}(\bs{x}^{(\ell+1)}, \bs{x}^{(\ell)}),
\end{equation}
and subtracting $f(\bs{v})+\langle\nabla f(\bs{x}^{(\ell)}),\bs{x}^{(\ell+1)} - \bs{v}\rangle $ on both sides, yields
\begin{align}
    f(\bs{x}^{(\ell+1)}) - f(\bs{v}) - \langle\nabla f(\bs{x}^{(\ell)}),\bs{x}^{(\ell+1)} - \bs{v}\rangle  
    &\leq 
    f(\bs{x}^{(\ell)}) - f(\bs{v}) 
    + \langle\nabla f(\bs{x}^{(\ell)}),\bs{v} - \bs{x}^{(\ell)}\rangle \\
    &\quad + D_{h_{\bs{x}^{(\ell)}}}(\bs{x}^{(\ell+1)}, \bs{x}^{(\ell)})\\
    &= - D_{f}(\bs{v}, \bs{x}^{(\ell)}) +  D_{h_{\bs{x}^{(\ell)}}}(\bs{x}^{(\ell+1)}, \bs{x}^{(\ell)}).
    \label{eq:equiv_identity_2}
\end{align}
Finally, summing up \eqref{eq:identity_1} and \eqref{eq:equiv_identity_2}, 
\begin{equation}
        f(\bs{x}^{(\ell +1)}) -f(\bs{v}) + g(\bs{x}^{(\ell +1)}) -g(\bs{v}) \leq D_{h_{\bs{x}^{(\ell)}}}(\bs{v}, \bs{x}^{(\ell)}) - D_{h_{\bs{x}^{(\ell)}}}(\bs{v}, \bs{x}^{(\ell +1)}) - D_{f}(\bs{v}, \bs{x}^{(\ell)}).
    \end{equation}
By setting $\bs{v} = \bs{x}^{(\ell)}$, we deduce that $(F(\bs{x}^{(\ell)}))_{\ell \in \N}$ is non-increasing.
\end{proof}

We shall now prove that, under the condition that the Bregman divergences 
are uniformly lower-bounded by an increasing function of the norm, the difference between two consecutive iterates tends to zero.

\begin{assumption}\label{assumption:lower_bound_euclidean}
   For any nonempty bounded set $\mathcal{C} \subseteq \mathcal{D}$, there exists 
   an increasing function $\rho\colon \R_+ \to \R_+$ vanishing only at 0,
   such that
    \begin{equation}\label{eq:lower_bound_euclidean}
    (\forall (\bs{x}, \bs{y}) \in \mathcal{C}^2)\quad 
      D_{h_{\bs{x}}}(\bs{x}, \bs{y}) 
    \geq 
    \rho(\Vert \bs{x} - \bs{y} \Vert).
    \end{equation}
\end{assumption}

Note that similar conditions are classical in the case when 
$h_{\bs{x}}$ is independent of $\bs{x}$ \cite[Property (*) in Sec. 4.2]{butnariu2003uniform}.
Assumption \ref{assumption:lower_bound_euclidean} is satisfied for Bregman functions of the form given in \eqref{eq:separable_form_bregman}, particularly when, for all $\bs{y}  \in \mathcal{C}$, there exists a constant $c \geq 0$ such that, for every $i \in \{1, \dots, d\}$ $b_i(\bs{y}) \geq c$. In this scenario, \eqref{eq:lower_bound_euclidean} is satisfied with $\rho = c\, (\cdot)^2/2$.

\begin{remark}
Assumption \ref{assumption:lower_bound_euclidean} can be seen as a generalization to the Bregman framework of the assumption made on the metrics in the VMFB algorithm~\cite{chouzenoux2014variable}. Indeed, VMFB was shown to converge to a minimizer under the assumption that there exits $\nu\in (0, +\infty)$ such that
    for all $\bs{y}\in \R^d$, 
    $\nu \bs{I}_d \preceq \bs{A}_{\bs{y}} $, where $\bs{A}_{\bs{y}}$ is the curvature matrix associated with the quadratic majorant at $\bs{y}$.
\end{remark}

\begin{lemma}\label{lemma:difference_to_zero}
    Let $(\bs{x}^{(\ell)})_{\ell \in \N}$ be the sequence generated by the VBMM algorithm \ref{algo:VBMM}.
    Assume 
    that $(\bs{x}^{(\ell)})_{\ell \in \N}$ is bounded.
    Suppose that Assumptions \ref{assumption:f_and_g}, \ref{assumption:bregman_functions}, and \ref{assumption:lower_bound_euclidean} are satisfied. Then $ \bs{x}^{(\ell+1)} - \bs{x}^{(\ell)} \to 0$  as $\ell \to +\infty$.
\end{lemma}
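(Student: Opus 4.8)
The plan is to run the standard ``sufficient decrease implies asymptotic regularity'' argument, but adapted to the variable Bregman geometry. First I would invoke Lemma~\ref{lemma:decrease} with the particular choice $\bs{v} = \bs{x}^{(\ell)} \in \mathcal{D}$. Since $D_{h_{\bs{x}^{(\ell)}}}(\bs{x}^{(\ell)}, \bs{x}^{(\ell)}) = 0$ and $D_f(\bs{x}^{(\ell)}, \bs{x}^{(\ell)}) = 0$, the inequality in Lemma~\ref{lemma:decrease} collapses to
\begin{equation}\label{eq:plan_descent}
F(\bs{x}^{(\ell+1)}) - F(\bs{x}^{(\ell)}) \;\le\; - D_{h_{\bs{x}^{(\ell)}}}(\bs{x}^{(\ell)}, \bs{x}^{(\ell+1)}),
\qquad \ell \in \N.
\end{equation}
This already re-proves monotonicity of $(F(\bs{x}^{(\ell)}))_{\ell\in\N}$, but more importantly it quantifies the decrease in terms of the ``backward'' divergence $D_{h_{\bs{x}^{(\ell)}}}(\bs{x}^{(\ell)}, \bs{x}^{(\ell+1)})$, which is precisely the quantity that Assumption~\ref{assumption:lower_bound_euclidean} controls.

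Next I would fix the bounded set to which Assumption~\ref{assumption:lower_bound_euclidean} is applied. By the Input step $\bs{x}^{(0)} \in \mathcal{D}$, and by the Proposition following Algorithm~\ref{algo:VBMM} we have $\bs{x}^{(\ell)} \in \dom g \subseteq \mathcal{D}$ for every $\ell \ge 1$; combined with the boundedness hypothesis, the set $\mathcal{C} \coloneqq \{\bs{x}^{(\ell)} : \ell \in \N\}$ is a nonempty bounded subset of $\mathcal{D}$. Assumption~\ref{assumption:lower_bound_euclidean} then furnishes an increasing function $\rho\colon \R_+ \to \R_+$ vanishing only at $0$ such that $D_{h_{\bs{x}^{(\ell)}}}(\bs{x}^{(\ell)}, \bs{x}^{(\ell+1)}) \ge \rho(\|\bs{x}^{(\ell+1)} - \bs{x}^{(\ell)}\|)$ for all $\ell$. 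Plugging this into \eqref{eq:plan_descent} and summing over $\ell = 0, \dots, N-1$ gives
\begin{equation}\label{eq:plan_telescope}
\sum_{\ell=0}^{N-1} \rho\big(\|\bs{x}^{(\ell+1)} - \bs{x}^{(\ell)}\|\big) \;\le\; F(\bs{x}^{(0)}) - F(\bs{x}^{(N)}) \;\le\; F(\bs{x}^{(0)}) - \inf F,
\end{equation}
where finiteness of the right-hand side uses Assumption~\ref{assumption:f_and_g}(iii). Since the summands are nonnegative and their partial sums are bounded, the series converges, hence $\rho(\|\bs{x}^{(\ell+1)} - \bs{x}^{(\ell)}\|) \to 0$ as $\ell \to +\infty$.

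Finally I would deduce $\|\bs{x}^{(\ell+1)} - \bs{x}^{(\ell)}\| \to 0$ from $\rho(\|\bs{x}^{(\ell+1)} - \bs{x}^{(\ell)}\|) \to 0$ by contradiction: if not, there exist $\varepsilon > 0$ and a subsequence along which $\|\bs{x}^{(\ell_k+1)} - \bs{x}^{(\ell_k)}\| \ge \varepsilon$, and then monotonicity of $\rho$ together with $\rho(\varepsilon) > 0$ (as $\rho$ vanishes only at $0$) gives $\rho(\|\bs{x}^{(\ell_k+1)} - \bs{x}^{(\ell_k)}\|) \ge \rho(\varepsilon) > 0$, contradicting the convergence to $0$. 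This last step is the only place where the qualitative properties of $\rho$ are genuinely used, and it is the only mildly delicate point, since $\rho$ is not assumed to be continuous or invertible, so one cannot simply ``apply $\rho^{-1}$''. Apart from that, the argument is essentially bookkeeping: the main structural work has already been done in Lemmas~\ref{lemma:three_points} and~\ref{lemma:decrease}, and the role of the present lemma is to convert the one-step descent \eqref{eq:plan_descent} into asymptotic regularity of the iterates under the uniform lower bound of Assumption~\ref{assumption:lower_bound_euclidean}.
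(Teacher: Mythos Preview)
Your proposal is correct and follows essentially the same route as the paper: apply Lemma~\ref{lemma:decrease} with $\bs{v}=\bs{x}^{(\ell)}$ to obtain the sufficient-decrease inequality, invoke Assumption~\ref{assumption:lower_bound_euclidean} on the bounded iterate set to lower-bound the Bregman term by $\rho(\|\bs{x}^{(\ell+1)}-\bs{x}^{(\ell)}\|)$, telescope using the lower bound on $F$, and conclude. Your version is in fact slightly more detailed than the paper's (you explicitly verify that the iterates lie in $\mathcal{D}$ and spell out the contradiction argument for passing from $\rho(t_\ell)\to 0$ to $t_\ell\to 0$), but the structure is identical.
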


\begin{proof}
    Applying Lemma \ref{lemma:decrease} with $\bs{v}= \bs{x}^{(\ell)}$ yields
    \begin{equation}\label{eq:decrease}
        F(\bs{x}^{(\ell +1)}) -F(\bs{x}^{(\ell)}) \leq  - D_{h_{\bs{x}^{(\ell)}}}(\bs{x}^{(\ell)}, \bs{x}^{(\ell +1)}).
    \end{equation}
According to Assumption \ref{assumption:lower_bound_euclidean}, 
there exists a function $\rho\colon \R_+ \to \R_+$, increasing and vanishing only at 0, such that, for every $\ell \in \N$, 
\begin{align}
    D_{h_{\bs{x}^{(\ell)}}}(\bs{x}^{(\ell)}, \bs{x}^{(\ell +1)}) 
    &\geq \rho(\Vert \bs{x}^{(\ell)} -\bs{x}^{(\ell+1)} \Vert).
\end{align}
Hence, summing from $\ell=0$ to $\ell=L$,
\begin{align}
    \sum_{\ell=0}^L \rho(\Vert \bs{x}^{(\ell)} -\bs{x}^{(\ell+1)} \Vert)
    \leq F(\bs{x}^{(0)}) - F(\bs{x}^{(L+1)})
    \leq F(\bs{x}^{(0)}) - \underline{F},
\end{align}
where $\underline{F}$ is a lower bound on $F$.
This shows that $\sum_{\ell=0}^{+\infty} \rho(\Vert \bs{x}^{(\ell)} -\bs{x}^{(\ell+1)} \Vert)< +\infty$, which implies that
$\rho(\Vert \bs{x}^{(\ell)} -\bs{x}^{(\ell+1)} \Vert) \to 0$,
that is $\bs{x}^{(\ell)} -\bs{x}^{(\ell+1)} \to 0$.
\end{proof}

\begin{assumption}\label{assumption:uniform_continuity}
    For 
    any bounded sequences $(\bs{x}^{(\ell)})_{\ell \in \N}$, $(\bs{y}^{(\ell)})_{\ell \in \N}$ in $\mathcal{D}$ such that $ \bs{x}^{(\ell)} - \bs{y}^{(\ell)}\to 0 $, it holds
    \begin{equation}
    \nabla h_{\bs{y}^{(\ell)}} (\bs{y}^{(\ell)}) -  \nabla h_{\bs{y}^{(\ell)}} (\bs{x}^{(\ell)}) \to 0.
    \end{equation}
\end{assumption}

It is easy to check that  Assumption \ref{assumption:uniform_continuity} is satisfied by Bregman functions of the form \eqref{eq:separable_form_bregman}.

\begin{proposition}\label{prop:subsequential_conv}
Let $(\bs{x}^{(\ell)})_{\ell \in \N}$ be the sequence generated by the VBMM Algorithm \ref{algo:VBMM}.
Assume that there exists a compact subset $\mathcal{C} \subset
\mathcal{D}$, such that $\{\bs{x}^{(\ell)}\}_{\ell \in \N} \subseteq \mathcal{C}$.
%
Further, suppose that Assumptions~\ref{assumption:f_and_g}-\ref{assumption:uniform_continuity} 
are satisfied.
Then the following holds true:
\begin{enumerate}[label=(\roman*)]
\item the set of cluster points of $(\bs{x}^{(\ell)})_{\ell \in \N}$ is a nonempty compact and connected subset of $\operatorname{Argmin} F$;
\item $(F(\bs{x}^{(\ell)}))_{\ell \in \N}$ converges to
the minimum of $F$;
\item  if $F$ admits a unique minimizer, then $(\bs{x}^{(\ell)})_{\ell \in \N}$ converges to this minimizer.
\end{enumerate}
\end{proposition}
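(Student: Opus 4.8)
The plan is to establish the three claims in sequence, using as the main engine the descent inequality of Lemma~\ref{lemma:decrease}, the summability estimate and its consequence $\bs{x}^{(\ell+1)}-\bs{x}^{(\ell)}\to 0$ from Lemma~\ref{lemma:difference_to_zero}, and the asymptotic control on the mirror maps from Assumption~\ref{assumption:uniform_continuity}. First I would record that, since $(F(\bs{x}^{(\ell)}))_{\ell\in\N}$ is non-increasing (Lemma~\ref{lemma:decrease}) and $F$ is lower bounded (Assumption~\ref{assumption:f_and_g}(iii)), the sequence $(F(\bs{x}^{(\ell)}))_{\ell\in\N}$ converges to some real value $F^\star$. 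The heart of the argument is then to identify every cluster point of $(\bs{x}^{(\ell)})_{\ell\in\N}$ as a minimizer of $F$, and to show $F^\star=\min F$.

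For that, let $\bar{\bs{x}}$ be a cluster point, with $\bs{x}^{(\ell_k)}\to\bar{\bs{x}}$; since $\mathcal{C}$ is compact, $\bar{\bs{x}}\in\mathcal{C}\subseteq\mathcal{D}$. By Lemma~\ref{lemma:difference_to_zero}, $\bs{x}^{(\ell_k+1)}\to\bar{\bs{x}}$ as well. Now I would pass to the limit in the subgradient optimality condition for $\bs{x}^{(\ell_k+1)}$: there is $\bs{w}^{(\ell_k+1)}\in\partial g(\bs{x}^{(\ell_k+1)})$ with
\[
\bs{w}^{(\ell_k+1)}=\nabla h_{\bs{x}^{(\ell_k)}}(\bs{x}^{(\ell_k)})-\nabla h_{\bs{x}^{(\ell_k)}}(\bs{x}^{(\ell_k+1)})-\nabla f(\bs{x}^{(\ell_k)}).
\]
Applying Assumption~\ref{assumption:uniform_continuity} to the bounded sequences $(\bs{x}^{(\ell_k)})_k$ and $(\bs{x}^{(\ell_k+1)})_k$, whose difference tends to $0$, the first two terms on the right cancel in the limit, while $\nabla f(\bs{x}^{(\ell_k)})\to\nabla f(\bar{\bs{x}})$ by continuity of $\nabla f$ on the open set $\mathcal{D}$ (continuity of the gradient of a convex function on the interior of its domain). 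Hence $\bs{w}^{(\ell_k+1)}\to-\nabla f(\bar{\bs{x}})$. Using lower semicontinuity of $g$ together with the subgradient inequality $g(\bs{z})\ge g(\bs{x}^{(\ell_k+1)})+\langle\bs{w}^{(\ell_k+1)},\bs{z}-\bs{x}^{(\ell_k+1)}\rangle$ one first deduces $g(\bs{x}^{(\ell_k+1)})\to g(\bar{\bs{x}})$ (the upper bound coming from taking $\bs z = \bar{\bs x}$ and boundedness, the lower bound from lsc), and then, passing to the limit in the subgradient inequality, $-\nabla f(\bar{\bs{x}})\in\partial g(\bar{\bs{x}})$, i.e. $0\in\nabla f(\bar{\bs{x}})+\partial g(\bar{\bs{x}})=\partial F(\bar{\bs{x}})$. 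By convexity of $F$, $\bar{\bs{x}}\in\operatorname{Argmin}F$. Moreover $F(\bs{x}^{(\ell_k+1)})=f(\bs{x}^{(\ell_k+1)})+g(\bs{x}^{(\ell_k+1)})\to f(\bar{\bs{x}})+g(\bar{\bs{x}})=F(\bar{\bs{x}})=\min F$ by continuity of $f$ and the convergence $g(\bs{x}^{(\ell_k+1)})\to g(\bar{\bs{x}})$ just established; since the whole sequence $(F(\bs{x}^{(\ell)}))_{\ell\in\N}$ converges, $F^\star=\min F$, proving (ii).

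For the structure of the cluster set $\Omega$ in (i): it is nonempty because $(\bs{x}^{(\ell)})_{\ell\in\N}$ lies in the compact $\mathcal{C}$; it is closed (being the set of cluster points of a sequence) and contained in the compact $\mathcal{C}$, hence compact; it is contained in $\operatorname{Argmin}F$ by the argument above. Connectedness is the standard consequence of $\bs{x}^{(\ell+1)}-\bs{x}^{(\ell)}\to 0$ for a bounded sequence: if $\Omega$ were disconnected into two nonempty compact pieces at positive distance, the small step size would eventually force $\bs{x}^{(\ell)}$ to stay near one piece, contradicting that both are approached infinitely often; I would cite the usual lemma (e.g. \cite{ochs2019non} or standard references on Kurdyka--Łojasiewicz-type arguments) for this. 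Finally (iii) is immediate: if $\operatorname{Argmin}F$ is a singleton $\{\bs{x}^\star\}$, then $\Omega\subseteq\{\bs{x}^\star\}$ and $\Omega\neq\varnothing$ force $\Omega=\{\bs{x}^\star\}$; a bounded sequence with a unique cluster point converges to it.

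The main obstacle I anticipate is the limit passage in the subgradient inclusion, specifically ensuring $g(\bs{x}^{(\ell_k+1)})\to g(\bar{\bs x})$ so that the graph of $\partial g$ can be used in its closedness (outer semicontinuity) form; this is where lower semicontinuity of $g$, the subgradient inequality, and boundedness of $(\bs{w}^{(\ell_k+1)})_k$ must be combined carefully. Everything else is either a direct appeal to the preceding lemmas and assumptions or a routine topological argument.
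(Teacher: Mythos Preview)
Your proof is correct and follows essentially the same route as the paper: pass to the limit in the first-order optimality condition using Assumption~\ref{assumption:uniform_continuity} and Lemma~\ref{lemma:difference_to_zero}, conclude $-\nabla f(\bar{\bs{x}})\in\partial g(\bar{\bs{x}})$, then invoke convexity and the Ostrowski-type argument for compactness/connectedness of the cluster set. The only noteworthy difference is that you take the extra step of proving $g(\bs{x}^{(\ell_k+1)})\to g(\bar{\bs{x}})$ (via the subgradient inequality at $\bs z=\bar{\bs x}$ for the upper bound and lsc for the lower bound), which then lets you obtain (ii) by continuity of $f$ plus this convergence; the paper instead bypasses this and gets (ii) directly from the subgradient inequality $F(\bar{\bs x})\ge F(\bs{x}^{(\ell_j)})+\langle \bs{w}^{(\ell_j)}+\nabla f(\bs{x}^{(\ell_j)}),\bar{\bs x}-\bs{x}^{(\ell_j)}\rangle$, needing only $\liminf g(\bs{x}^{(\ell_j)})\ge g(\bar{\bs x})$ for (i). Both variants are valid; yours yields slightly more information along the subsequence, the paper's is marginally shorter.
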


\begin{proof}
\begin{enumerate}[label=(\roman*)]
\item 
We show that the cluster points are minimizers of $F$.
Then the fact that the set of cluster points of $(\bs{x}^{(\ell)})_{\ell \in \N}$ is a nonempty compact and connected subset of $\operatorname{Argmin} F$ follows
from the boundness of $(\bs{x}^{(\ell)})_{\ell \in \N}$
and Lemma~\ref{lemma:difference_to_zero}, by invoking
Ostrowski's theorem \cite[Lemma 2.53]{bauschke2017convex}. 

Let $\bar{\bs{x}}\in \mathcal{C}\subset \mathcal{D}$ be a cluster point of $(\bs{x}^{(\ell)})_{\ell \in \N}$. 
There exists a  subsequence $(\bs{x}^{(\ell_j)})_{j \in \N}$ converging to 
$\bar{\bs{x}}$.
Let us show that $\bar{\bs{x}}$ is a critical point of $F$, i.e., $0\in \partial F(\bar{\bs{x}})$, or equivalently, $-\nabla f(\bar x) \in \partial g(\bar x)$. According to the definition of the subdifferential, it remains to show that
    \begin{equation} \label{st}
        (\forall \bs{y}\in \R^d)\quad g(\bs{y}) \geq g(\bar{\bs{x}}) + \langle -\nabla f(\bar{\bs{x}}), \bs{y}- \bar{\bs{x}}\rangle.
    \end{equation}
    By definition of the sequence generated by the algorithm, for all $j \in \N^*$, there exists $\bs{w}^{(\ell_j)} \in \partial g(\bs{x}^{(\ell_j)})$ such that
    \begin{equation}\label{27}
        \bs{w}^{(\ell_j)} = \nabla h_{\bs{x}^{(\ell_j-1)}}(\bs{x}^{(\ell_j-1)}) - \nabla h_{\bs{x}^{(\ell_j-1)}}(\bs{x}^{(\ell_j)}) - \nabla f(\bs{x}^{(\ell_j-1)}).
    \end{equation}
Using again the definition of the subdifferential, 
we get
\begin{equation}
\bs{w}^{(\ell_j)} \in \partial g(\bs{x}^{(\ell_j)}) \quad
\Leftrightarrow \quad
    (\forall \bs{y}\in \R^d)\;\;  g(\bs{y}) \geq g(\bs{x}^{(\ell_j)}) + \langle \bs{w}^{(\ell_j)}, \bs{y}- \bs{x}^{(\ell_j)}\rangle.
\end{equation}
By \eqref{27}, we obtain
\begin{align}\label{eq:decomposition_scalar_product}
    \langle \bs{w}^{(\ell_j)}, \bs{y}- \bs{x}^{(\ell_j)}\rangle  
    &= \langle \nabla h_{\bs{x}^{(\ell_j-1)}}(\bs{x}^{(\ell_j-1)}) - \nabla h_{\bs{x}^{(\ell_j-1)}}(\bs{x}^{(\ell_j)}), \bs{y}- \bs{x}^{(\ell_j)}\rangle\\ 
        & \quad - \langle \nabla f(\bs{x}^{(\ell_j-1)}), \bs{y}- \bs{x}^{(\ell_j)}\rangle.
\end{align}
Since we know from  Lemma~\ref{lemma:difference_to_zero} that
$
\bs{x}^{(\ell_j-1)} - \bs{x}^{(\ell_j)} \to 0,
$
as $j\to +\infty$,
it follows from Assumption~\ref{assumption:uniform_continuity}
that the first term on the right-hand side of \eqref{eq:decomposition_scalar_product} tends to zero as $j\to +\infty$. Additionally, since $f$ is convex and differentiable
on $\mathcal{D}$, its gradient is continuous 
on $\mathcal{D}$ \cite[Corollary 17.42]{bauschke2017convex}.
Consequently, since $\bs{x}^{(\ell_j-1)} 
\to \bar{\bs{x}}$ as $j\to +\infty$, 
\begin{equation}
\langle \nabla f(\bs{x}^{(\ell_j-1)}) , \bs{y}- \bs{x}^{(\ell_j)}\rangle \underset{j \rightarrow +\infty}{\longrightarrow} \langle \nabla f(\bar{\bs{x}}), \bs{y}- \bar{\bs{x}}\rangle.
\end{equation}
Concluding the proof, we employ the lower semi-continuity of $g$ to deduce:
\begin{align}
    (\forall \bs{y}\in \R^d)\quad  g(\bs{y}) &\geq \liminf_{j \rightarrow +\infty} \left\{g(\bs{x}^{(\ell_j)})\right\} + \lim_{j \rightarrow +\infty} \left\{\langle \bs{w}^{(\ell_j)}, \bs{y}- \bs{x}^{(\ell_j)}\rangle\right\}\nonumber \\
    &\geq g(\bar{\bs{x}}) + \langle -\nabla f(\bar{\bs{x}}), \bs{y}- \bar{\bs{x}}\rangle.
\end{align}
Thus, $\bar{\bs{x}}$ is a critical point of $F$, and by convexity of $F$, it is also a minimizer.

\item For all $j \in \N^*$, since $\bs{w}^{(\ell_j)}+\nabla f(\bs{x}^{(\ell_j)})
\in \partial F(\bs{x}^{(\ell_j)})$, we obtain
\begin{equation}
F(\bar{\bs{x}}) \ge F(\bs{x}^{(\ell_j)})+
\langle \bs{w}^{(\ell_j)}+\nabla f(\bs{x}^{(\ell_j)}),
\bar{\bs{x}}-\bs{x}^{(\ell_j)}\rangle,
\end{equation}
that is
\begin{equation}
    0 \le F(\bs{x}^{(\ell_j)})-F(\bar{\bs{x}}) 
    \le \langle \bs{w}^{(\ell_j)},
\bs{x}^{(\ell_j)}-\bar{\bs{x}}\rangle+\langle \nabla f(\bs{x}^{(\ell_j)}),
\bs{x}^{(\ell_j)}- \bar{\bs{x}}\rangle.
\end{equation}
From \eqref{27} and the continuity of $\nabla f$ on $\mathcal{D}$, it follows for $j\to +\infty$ that
\begin{align}
    \langle \bs{w}^{(\ell_j)},
\bs{x}^{(\ell_j)}-\bar{\bs{x}}\rangle
&\to 0\\
\langle \nabla f(\bs{x}^{(\ell_j)}),
\bs{x}^{(\ell_j)}- \bar{\bs{x}}\rangle
&\to 0,
\end{align}
which shows that $\lim_{j \rightarrow +\infty}  F(\bs{x}^{(\ell_j)}) = F(\bar{\bs{x}})$.
Since we have proved in Lemma \ref{lemma:decrease}
that $(F(\bs{x}^{(\ell}))_{\ell \in \N}$ is non increasing, and it is also lower bounded, it converges,  and its limit is $F(\bar{\bs{x}})$.

\item When $F$ has a single minimizer,  $(\bs{x}^{(\ell)})_{\ell \in \N}$ is a bounded sequence with a unique cluster point, and it therefore converges to the unique minimizer.
\end{enumerate}
\end{proof}

\section{Application to the estimation of the parameter of a Dirichlet distribution}\label{sec:application}

In this section, we present a novel application of the VBMM algorithm for estimating the parameters of a Dirichlet distribution. The Dirichlet distribution, defined on the unit simplex in $\R^d$, is widely used in various domains. It has applications in modeling text data, such as word appearance in documents \cite{blei2003latent}, hyperspectral unmixing \cite{nascimento2011hyperspectral}, customer segmentation based on spending patterns \cite{pal2022clustering} and, more recently, in image classification using text-vision models like CLIP \cite{martin2024transductive} and image restoration \cite{MGHS2024}. Additionally, it has been employed for generating DNA sequences \cite{stark2024dirichlet}.

We begin by recalling the definition of the Dirichlet distribution and its associated log-likelihood function. We then establish that the negative log-likelihood is coercive, which guarantees the existence of a minimizer. Following this, we demonstrate how the VBMM algorithm converges to a minimizer of the negative log-likelihood function. By leveraging its variable metric feature, VBMM achieves faster convergence compared to traditional methods.

Finally, through numerical experiments, we illustrate that VBMM outperforms existing methods for solving the Dirichlet Maximum Likelihood Estimation problem, particularly in terms of convergence speed.

\subsection{Maximum likelihood estimation for a Dirichlet distribution}
Let $M \in \N^*$ denote the number of samples and let $(\bs{z}_m)_{1 \leq m \leq M} \in (\Delta_d)^M$, where $\Delta_d$ denotes the open unit simplex of $\R^d$. Assume the vectors $(\bs{z}_m)_{1 \leq m \leq M} $ are samples drawn from a same Dirichlet distribution with parameter $\bs{\alpha}= (\alpha_i)_{1 \leq i \leq d} \in (0, +\infty)^d$, corresponding to the probability density function
\begin{equation}
\label{eq:dirichlet_distribution}
(\forall \bs{z} = (z_i)_{1\le i \le d} \in [0,+\infty)^d)\quad 
 \p \left( \bs{z};\, \bs{\alpha} \right) \coloneqq \frac{1}{\mathcal{B}(\bs{\alpha})} \prod_{i=1}^d z_{i}^{\alpha_i -1} \, \mathbbm{1}_{\bs{z} \in \Delta_d},
\end{equation}
where 
$$\mathcal{B}(\bs{\alpha}) \coloneqq  \prod_{i=1}^d \Gamma(\alpha_i) \Big/\Gamma\left(\sum_{i=1}^d \alpha_i\right)$$
and $\Gamma$ denotes the Gamma function. 

To determine the parameter of this distribution, we intend to  minimize the negative log-likelihood function, defined 
as
\begin{equation}\label{eq:def_negative_log_likelihood_dirichlet}
(\forall \bs{\alpha} = (\alpha_i)_{1\le i \le d} \in \R^d)\quad 
  f(\bs{\alpha}) \coloneqq 
  \begin{cases}
  \displaystyle M G(\bs{\alpha})-\sum_{m = 1}^M  \sum_{i=1}^d (\alpha_i -1) \ln z_{m,i} & \mbox{if $\bs{\alpha} \in \mathcal{D}\coloneqq (0, +\infty)^d$,}\\
  +\infty & \mbox{otherwise,}
  \end{cases}
\end{equation}
where
\begin{equation}
(\forall \bs{\alpha} = (\alpha_i)_{1\le i \le d} \in \mathcal{D})\quad 
G(\bs{\alpha}) \coloneqq  \sum_{i=1}^d \ln \Gamma(\alpha_i) - \ln \Gamma\Big(\sum_{i=1}^d \alpha_i\Big).
\end{equation}

\subsection{Existence of a unique minimizer}

\begin{proposition}
\label{prop:coercivity}
    Assume that $d\geq 2$ and $M \geq 2$. If at least two samples in the set $\{\bs{z}_m\}_{1 \leq m \leq M}$ are not identical, then function $f$ is lower-semicontinuous, strictly convex, and coercive.   
    Thus, it admits a unique minimizer.
\end{proposition}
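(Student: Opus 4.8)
The plan is to verify the three analytic properties—lower semicontinuity, strict convexity, and coercivity—separately, since once $f$ is strictly convex and coercive on the closed-domain sense (i.e.\ as a function $\R^d \to \R\cup\{+\infty\}$), a minimizer exists by the standard direct method and is unique by strict convexity. The only genuinely delicate point is coercivity; the first two are essentially bookkeeping with known properties of $\ln\Gamma$.

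First I would record that $\ln\Gamma$ is smooth and strictly convex on $(0,+\infty)$ (its second derivative is the trigamma function $\psi_1 = \sum_{k\ge 0}(x+k)^{-2} > 0$), so $\bs{\alpha}\mapsto \sum_i \ln\Gamma(\alpha_i)$ is strictly convex on $\mathcal{D}$. The term $-\ln\Gamma\big(\sum_i \alpha_i\big)$ is concave in $\bs{\alpha}$ (composition of the concave $-\ln\Gamma$ with a linear map), so strict convexity of $MG$ is \emph{not} immediate from a naive sum; instead I would argue that $G$ itself is strictly convex by writing, for $\bs{\alpha}\neq\bs{\alpha}'$ in $\mathcal{D}$, the Hessian of $G$ as $\mathrm{diag}(\psi_1(\alpha_i)) - \psi_1\big(\sum_i\alpha_i\big)\bs{1}\bs{1}^\top$ and checking positive definiteness via the series representation: for any $\bs{u}\neq 0$, $\bs{u}^\top \nabla^2 G(\bs{\alpha})\bs{u} = \sum_i \psi_1(\alpha_i) u_i^2 - \psi_1(\sum_i \alpha_i)(\sum_i u_i)^2$, and using $\psi_1(t) = \sum_{k\ge 0}(t+k)^{-2}$ together with $\sum_i(\alpha_i+k)\ge \sum_i\alpha_i + k$ (here $d\ge 2$ is used) and Cauchy–Schwarz term-by-term in $k$ gives strict positivity. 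The linear part $-\sum_m\sum_i(\alpha_i-1)\ln z_{m,i}$ does not affect convexity. Lower semicontinuity holds because $f$ is continuous on the open set $\mathcal{D}$ and equals $+\infty$ off $\mathcal{D}$, and one checks $f(\bs{\alpha}^{(n)})\to+\infty$ whenever $\bs{\alpha}^{(n)}\to\bs{\alpha}\in\partial\mathcal{D}$ (some $\alpha_i\to 0^+$ forces $\ln\Gamma(\alpha_i)\to+\infty$ while the rest stays controlled), which also feeds into the coercivity argument.

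For coercivity—the main obstacle—I would take a sequence $\bs{\alpha}^{(n)}\in\mathcal{D}$ with $\|\bs{\alpha}^{(n)}\|\to+\infty$ and show $f(\bs{\alpha}^{(n)})\to+\infty$. The competition is between $\sum_i\ln\Gamma(\alpha_i)$ (which grows like $\alpha_i\ln\alpha_i$ by Stirling when $\alpha_i\to+\infty$, and blows up when $\alpha_i\to 0^+$) and $-\ln\Gamma(\sum_i\alpha_i)$ (which decays like $-(\sum_i\alpha_i)\ln(\sum_i\alpha_i)$), plus the linear term $-\sum_i(\alpha_i-1)\bar{L}_i$ where $\bar L_i := \sum_m \ln z_{m,i} < 0$. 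The key structural fact is the known inequality $\ln\Gamma(s)+\ln\Gamma(t)\ge \ln\Gamma(s+t) + \ln\Gamma(1)$ sharpened quantitatively: more usefully, $\sum_{i=1}^d \ln\Gamma(\alpha_i) - \ln\Gamma\big(\sum_i\alpha_i\big) = -\ln B(\bs{\alpha})$ where $B$ is the Dirichlet normalizing constant, and $B(\bs{\alpha}) = \int_{\Delta_d}\prod_i t_i^{\alpha_i-1}\,dt$. Hence $G(\bs{\alpha}) = -\ln\int_{\Delta_d}\prod_i t_i^{\alpha_i-1}\,dt$, and since the integrand is bounded by $\max_i t_i^{\,\min_i(\alpha_i-1)}\le (\text{const})$ on the bounded set $\Delta_d$ once all $\alpha_i\ge 1$, but more to the point $\int_{\Delta_d}\prod t_i^{\alpha_i-1}dt \le \big(\sup_{\Delta_d}\prod t_i^{(\alpha_i-1)/?}\big)\cdots$. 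I would instead bound $G$ below directly: using $\ln\Gamma(\alpha_i)\ge \alpha_i\ln\alpha_i - \alpha_i - \tfrac12\ln\alpha_i - C$ and $\ln\Gamma(S)\le S\ln S - S + C'\ln S$ with $S=\sum_i\alpha_i$, one gets $G(\bs{\alpha}) \ge \sum_i\alpha_i\ln\alpha_i - S\ln S + (\text{lower order}) = -\sum_i\alpha_i\ln(S/\alpha_i) + (\text{l.o.t.})$. Writing $p_i=\alpha_i/S$, the leading term is $S\sum_i p_i\ln p_i = -S\,\mathcal{H}(\bs{p}) \ge -S\ln d$, which only gives $G\gtrsim -S\ln d$—not enough alone. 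So I must combine this with the linear term: $f(\bs{\alpha}) \ge M(-S\ln d + \text{l.o.t.}) + S\min_i(-\bar L_i) + \text{const}$, and if $\min_i(-\bar L_i) > M\ln d$ this wins; but that need not hold. The correct fix is to use the \emph{two distinct samples} hypothesis, which prevents the degenerate direction: if $\bs{\alpha}^{(n)}/\|\bs{\alpha}^{(n)}\|\to \bs{e}_k$ (mass concentrating on one coordinate), then $G(\bs{\alpha}^{(n)})\to+\infty$ is false (it's $\approx 0$), \emph{but} the samples not all being equal means the linear functional $\bs{\alpha}\mapsto -\sum_i\alpha_i \bar L_i$ is not minimized in that direction relative to... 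Actually the cleanest route: show $G(\bs{\alpha})\ge -\min_i\alpha_i\cdot\ln d - C$ is too weak, so instead decompose $S\mapsto$ along each ray: for a unit vector $\bs{\theta}\in\Delta_d$ (closure), $\lim_{S\to\infty} f(S\bs{\theta})/S = M\big(\sum_i\theta_i\ln\theta_i + \text{stuff}\big)$; one shows this limit is $+\infty$ unless all $\theta_i$ equal, and on the ray $\theta_i\equiv 1/d$ the subleading term, governed by $-\sum_m\sum_i(\alpha_i-1)\ln z_{m,i}$ and the fact that not all $\bs z_m$ coincide (so $\sum_i \ln z_{m,i}$ is not constant in $m$—wait, I need $\prod_i z_{m,i}$), forces growth. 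I expect the actual argument in the paper handles the equal-coordinates ray by a second-order expansion of $G$ there, where $G(s,\dots,s) = d\ln\Gamma(s) - \ln\Gamma(ds)$ which by Stirling is $\tfrac{d-1}{2}\ln s + O(1)\to+\infty$; that single observation plus homogeneity of the dangerous direction closes it. The proof I would write therefore: (i) reduce coercivity to showing $f(\bs{\alpha})\to+\infty$ along every sequence with $\|\bs\alpha\|\to\infty$ by passing to a subsequence with $\bs\alpha^{(n)}/\|\bs\alpha^{(n)}\|\to\bs\theta$; (ii) if $\bs\theta$ has two strictly positive entries, use $\psi_1$-strict-convexity / Stirling to get the logarithmic gain $\gtrsim c\ln\|\bs\alpha^{(n)}\|\to\infty$ plus the linear term bounded below; (iii) if $\bs\theta=\bs e_k$, the one coordinate $\alpha_k^{(n)}\to\infty$ gives $\ln\Gamma(\alpha_k)-\ln\Gamma(\alpha_k+\sum_{i\ne k}\alpha_i)$, and combined with $\sum_{i\ne k}\ln\Gamma(\alpha_i)\ge -C$ and the linear term $-(\alpha_k-1)\bar L_k \ge c'\alpha_k$ (since $\bar L_k<0$), we again get $\to+\infty$ — here the two-distinct-samples hypothesis guarantees $\bar L_k < 0$ strictly, hence the linear term dominates. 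Finally, existence and uniqueness of the minimizer follow from coercivity + lower semicontinuity + strict convexity by the direct method.

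I expect step (ii)/(iii)—pinning down the asymptotics of $G$ along each limiting ray and certifying that the linear term's sign (guaranteed precisely by the "not all samples identical" hypothesis, which rules out $z_{m,i}=1$ for all $m,i$, impossible on $\Delta_d$ anyway, so really it ensures no coordinate's log-sum degenerates) always tips the balance to $+\infty$—to be the crux; the convexity and semicontinuity parts are routine given standard facts about $\ln\Gamma$ cited from the literature.
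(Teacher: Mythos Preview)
Your coercivity argument contains a genuine error that breaks the proof. You claim that along the diagonal ray, $G(s,\dots,s) = d\ln\Gamma(s) - \ln\Gamma(ds) \sim \tfrac{d-1}{2}\ln s \to +\infty$. A Stirling computation actually gives
\[
G(s,\dots,s) = -ds\ln d + \tfrac{1-d}{2}\ln s + O(1) \to -\infty
\]
linearly in $s$ (for $d\ge 2$). More generally, along \emph{any} ray $\bs{\alpha}=S\bs{p}$ with $\bs{p}$ in the open simplex, $G(S\bs{p}) = S\sum_i p_i\ln p_i + O(\ln S) \to -\infty$. So your case~(ii) claim of a ``logarithmic gain $\gtrsim c\ln\|\bs{\alpha}\|$'' from $G$ alone, with the linear term merely bounded below, is wrong: $MG$ decays linearly and the linear term must strictly dominate it for $f\to+\infty$.

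This is exactly where the distinct-samples hypothesis enters, and not where you place it. You assert that in case~(iii) (the vertex ray $\bs{\theta}=\bs{e}_k$) the hypothesis ``guarantees $\bar L_k<0$ strictly''---but $\bar L_k=\sum_m\ln z_{m,k}<0$ holds automatically since every $z_{m,k}\in(0,1)$ on the open simplex, no hypothesis needed. The paper's route is to combine $MG$ with the linear term \emph{before} bounding: setting $\tilde z_i:=(\prod_m z_{m,i})^{1/M}$ and $\eta_i:=\alpha_i/\bar\alpha$, one obtains
\[
f(\bs{\alpha}) > M\Big(\bar\alpha\big(D_{\mathrm{KL}}(\bs{\eta},\tilde{\bs{z}}) + 1 - \textstyle\sum_i\tilde z_i\big) + O(\ln\bar\alpha)\Big),
\]
and then uses AM--GM on the samples, $\tilde z_i\le \tfrac{1}{M}\sum_m z_{m,i}$, to get $\sum_i\tilde z_i\le 1$ with equality if and only if all samples coincide. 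The hypothesis therefore yields $1-\sum_i\tilde z_i>0$, which is precisely the positive linear coefficient needed to beat the $O(\ln\bar\alpha)$ terms uniformly in the direction $\bs\eta$. Your ray-by-ray decomposition could in principle be repaired to reach the same conclusion, but only after correcting the asymptotics of $G$ and recognizing that the minimum over $\bs p\in\Delta_d$ of $\sum_i p_i\ln(p_i/\tilde z_i)$ equals $-\ln\sum_i\tilde z_i$, which is strictly positive precisely under the distinct-samples assumption. For strict convexity, your Hessian sketch is plausible but incomplete; the paper simply invokes minimality of the Dirichlet exponential family (equivalently, the positive-definiteness result of Ronning).
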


\begin{proof}~\\
\emph{\textbf{Lower-semicontinuity.}}
Function $f$ is continuous on its domain $\mathcal{D}$. Let us study what happens at the boundary of this domain. Let $(\bs{\alpha}^{(\ell)})_{\ell \in \N}$ be a sequence of $(0, +\infty)^d$ converging to
$\bs{\alpha}^*$, where $\bs{\alpha}^* $ belongs to the boundary of $(0, +\infty)^d $. If $\left\{i \in \{1, \dots, d\}\mid \, \alpha_i^* = 0 \right\} \neq \{1, \dots, d\}$, then it is clear that $f(\bs{\alpha}^{(\ell)}) \underset{\ell \rightarrow +\infty}{\longrightarrow} + \infty$.
Now let $\bs{\alpha}^*=\bs{0}$. 
Then, since $\Gamma(\alpha) \sim \frac{1}{\alpha}$ as $ \alpha \to 0+$, we get
\begin{equation}
    \frac{\Gamma(\alpha_1) \dots \Gamma(\alpha_d)}{\Gamma( \sum_{i=1}^d \alpha_i)} \sim \frac{1}{\alpha_1 \ldots \alpha_d} \sum_{i=1}^d \alpha_i = \sum_{i=1}^d \frac{1}{\prod_{j \neq i} \alpha_j }
\end{equation}
and, consequently,
\begin{align}
    G(\bs{\alpha})
     &\sim \ln\left( \sum_{i=1}^d \frac{1}{\prod_{j \neq i} \alpha_j^{(\ell)} }\right) \underset{\ell \rightarrow +\infty}{\longrightarrow} + \infty,
\end{align}
so that $\lim_{\ell \rightarrow +\infty} f(\bs{\alpha}^{(\ell)}) = +\infty= f(\bs{\alpha}^*)$. This shows that $f$ is lower-semicontinuous.


\noindent\emph{\textbf{Coercivity.}}
Let us show that
\begin{equation} \label{e:coerc}
\lim_{\substack{\Vert \bs{\alpha}\Vert\rightarrow +\infty\\ \bs{\alpha}\in \mathcal{D}}} f(\bs{\alpha}) =+\infty.
\end{equation}
According to a variant of Stirling's formula \cite[page 257, 6.1.38]{abramowitz1948handbook}, for all $\alpha >0$, there exists $\theta \in (0, 1)$ such that 
\begin{align}\label{eq:stirling}
    \Gamma(\alpha) &= \frac{\Gamma(\alpha+1)}{\alpha} 
    = \alpha^{-1}\sqrt{2\pi} \alpha^{\alpha + 1/2} \exp\Big(-\alpha + \frac{\theta}{12 \alpha} \Big) \nonumber\\
    &= \sqrt{2\pi} \alpha^{\alpha - 1/2} \exp\Big(-\alpha + \frac{\theta}{12 \alpha} \Big).
\end{align}
For all vectors $\bs{\alpha} = (\alpha_i)_{1 \leq i \leq d} \in (0, +\infty)^d$, we define $\bar{\alpha} \coloneqq \sum_{i=1}^d \alpha_i$. Additionally, we denote by $(\theta_i)_{1 \leq i \leq d} \in (0, 1)^d$ the set of constants corresponding to each $\alpha_i$ as per equation \eqref{eq:stirling}. Similarly, $\bar{\theta}\in (0,1)$ is associated with $\bar{\alpha}$. 
We can write, for all $\bs{\alpha}\in (0, +\infty)^d$, 
\begin{align}
    G(\bs{\alpha}) &= \sum_{i=1}^d \left( (\alpha_i - 1/2)\ln\alpha_i - \alpha_i + \frac{\theta_i}{12\alpha_i}+ \frac{1}{2}\ln(2 \pi) \right) \nonumber \\
    & \quad - \left(\bar{\alpha} -1/2 \right) \ln \bar{\alpha}  + \bar{\alpha} -  \frac{\bar{\theta}}{12 \bar{\alpha}}- \frac{1}{2}\ln(2 \pi) ,\nonumber \\
    &= A(\bs{\alpha}) + B(\bs{\alpha}),
\end{align}
where 
\begin{align}
    A(\bs{\alpha}) &\coloneqq\sum_{i=1}^d (\alpha_i - \frac12) \ln\alpha_i - (\bar{\alpha} - 1/2) \ln\bar{\alpha},\\
    B(\bs{\alpha}) &\coloneqq \frac{1}{12} \left[\sum_{i=1}^d \frac{\theta_i}{\alpha_i} - \frac{\bar{\theta}}{\bar{\alpha}}\right] + \frac{d-1}{2}\ln(2\pi).
\end{align}
Setting $\eta_i := \frac{\alpha_i}{\bar{\alpha}} \in (0, 1]$, we get
\begin{align}
    A(\bs{\alpha}) &= \sum_{i=1}^d (\eta_i\bar{\alpha} -1/2) \ln(\eta_i\bar{\alpha}) - (\bar{\alpha} - 1/2) \ln\bar{\alpha},\nonumber \\
    &= \sum_{i=1}^d \eta_i\bar{\alpha} \ln\eta_i + \sum_{i=1}^d \eta_i\bar{\alpha} \ln\bar{\alpha} - \frac{1}{2}\sum_{i=1}^d \ln\eta_i - \frac{d}{2} \ln\bar{\alpha} - (\bar{\alpha} - 1/2) \ln\bar{\alpha}, \nonumber \\
    &= \bar{\alpha} \sum_{i=1}^d \eta_i \ln\eta_i - \frac{1}{2}\sum_{i=1}^d \ln\eta_i + \frac{1-d}{2} \ln\bar{\alpha}.
\end{align}
Since, for all $i\in \{1, \dots, d\}$, $\eta_i \leq 1$, the following lower bound is obtained: 
\begin{equation}
    A(\bs{\alpha}) \geq \bar{\alpha} \sum_{i=1}^d \eta_i \ln\eta_i + \frac{1-d}{2} \ln\bar{\alpha}.
\end{equation}
Additionally, since for all $i \in \{1, \dots, d\}$, $\theta_i > 0 $ and $\bar{\theta} < 1$, we obtain
\begin{equation}
    B(\bs{\alpha}) > -\frac{1}{12 \bar{\alpha}} + \frac{d-1}{2} \ln(2 \pi).
\end{equation}
Therefore, we can lower-bound $G$ as follows
\begin{equation}
    G(\bs{\alpha)} > \bar{\alpha} \sum_{i=1}^d \eta_i \ln\eta_i + \frac{1-d}{2} \ln\bar{\alpha} - \frac{1}{12 \bar{\alpha}} + \frac{d-1}{2} \ln(2 \pi).
\end{equation}
Let us now go back to the objective function \eqref{eq:def_negative_log_likelihood_dirichlet}. The following holds:
\begin{align}
    f(\bs{\alpha}) 
    &= M G(\bs{\alpha})- \sum_{i=1}^d (\alpha_i -1) \ln \left( \prod_{m=1}^M z_{m,i} \right)\nonumber \\
    &=M \left( G(\bs{\alpha})- \sum_{i=1}^d (\eta_i \bar{\alpha} -1) \ln\tilde{z}_i \right)
\end{align}
where 
\begin{equation}
   (\forall i \in \{1, \dots, d\}) \quad \tilde{z}_i \coloneqq \left( \prod_{m=1}^M z_{m,i} \right)^{1/M}.
\end{equation}
Then,
\begin{align}
     f(\bs{\alpha}) &> M \left( - \sum_{i=1}^d (\eta_i \bar{\alpha} -1) \ln\tilde{z}_i + 
    \bar{\alpha} \sum_{i=1}^d \eta_i \ln\eta_i + \frac{1-d}{2} \ln\bar{\alpha} - \frac{1}{12 \bar{\alpha}} + \frac{d-1}{2} \ln(2 \pi)
    \right), \nonumber \\
    &= M \left( \bar{\alpha} \sum_{i=1}^d \eta_i  \ln\left(\frac{\eta_i}{ \tilde{z}_i}\right) + \sum_{i=1}^d \ln\tilde{z}_i
   + \frac{1-d}{2} \ln\bar{\alpha} - \frac{1}{12 \bar{\alpha}} + \frac{d-1}{2} \ln(2 \pi)
    \right).
\end{align}
Let us recall the definition of the Kullback-Leibler (KL), divergence:
\begin{multline}
   (\forall \bs{u} = (u_i)_{1\le i \le d} \in [0,+\infty)^d)
   (\forall  \bs{v} = (v_i)_{1\le i \le d} \in (0, +\infty)^d) \\ D_{\mathrm{KL}} (\bs{u},\bs{v}) = \sum_{i=1}^d u_i \ln\left( \frac{u_i}{v_i}\right) - \sum_{i=1}^d u_i + \sum_{i=1}^d v_i.
\end{multline}
Since $\sum_{i=1}^d \eta_i = 1$, the following holds, for 
$\bs{\eta} = (\eta_i)_{1\le i \le d}$
and $\tilde{\bs{z}} = (\tilde{z}_i)_{1\le i \le d}$,
\begin{equation}
    D_{\mathrm{KL}} (\bs{\eta},\tilde{\bs{z}}) = \sum_{i=1}^d \eta_i \ln\left( \frac{\eta_i}{\tilde{z}_i}\right) - 1 + \sum_{i=1}^d \tilde{z}_i,
\end{equation}
which implies that
\begin{align}
     f(\bs{\alpha}) &> M \left( \bar{\alpha} \Big(D_{\mathrm{KL}} (\bs{\eta},\tilde{\bs{z}}) +1 - \sum_{i=1}^d \tilde{z}_i \Big) + \sum_{i=1}^d \ln \tilde{z}_i
   + \frac{1-d}{2} \ln\bar{\alpha} -\frac{1}{12 \bar{\alpha}} + \frac{d-1}{2} \ln(2 \pi)
    \right).
\end{align}
Using the nonnegativity of the KL divergence, 
we deduce that
\begin{align}\label{eq:lower_bound_f}
     f(\bs{\alpha}) &> M \left( \bar{\alpha} \Big(1 - \sum_{i=1}^d \tilde{z}_i \Big) + \sum_{i=1}^d \ln\tilde{z}_i
   + \frac{1-d}{2} \ln\bar{\alpha}- \frac{1}{12 \bar{\alpha}} + \frac{d-1}{2} \ln(2 \pi)
    \right).
\end{align}
If we show that $1 - \sum_{i=1}^d \tilde{z}_i > 0$, it follows that the term on the right-hand side of equation \eqref{eq:lower_bound_f} tends to infinity as $\Vert \bs{\alpha}\Vert$ goes to infinity. 
According to the inequality of arithmetic and geometric means, for all $i \in \{1, \dots, d\}$,
\begin{equation}
    \tilde{z}_i = \left(\prod_{m=1}^M z_{m,i}  \right)^{1/M} \leq \frac{1}{M} \sum_{m = 1}^M z_{m,i},
\end{equation}
with equality if and only if for all $i \in \{1, \dots, d\}$, $z_{m,i} = z_{0, i}$.
Since the samples $(\bs{z}_m)_{1 \leq m \leq M}$ belong to the unit simplex of $\R^d$,
\begin{align}\label{eq:inequality_AG}
    \sum_{i=1}^d \tilde{z}_i &\leq \frac{1}{M} \sum_{i=1}^d \sum_{m = 1}^M z_{m,i} \leq 1.
\end{align}
The inequality \eqref{eq:inequality_AG} is strict if there exists $i\in \{1, \dots, d\}$ and $(m, \ell)\in \{1, \dots, M\} $ with $m\neq \ell$ such that $z_{m,i} \neq z_{\ell, i}$. In other words, the inequality is strict if at least two samples are not identical, which has been assumed. Finally, we have proved that the coercivity condition 
\eqref{e:coerc} holds.

\noindent \emph{\textbf{Strict convexity.}}
The Dirichlet distribution is part of the Exponential family. 
In addition, it is associated to the sufficient statistics
\[
(\forall z = (z_i)_{1\le i \le m} \in \Delta_d)\quad 
\bs{T}(\bs{z}) = (-\ln z_i)_{1\le i \le M}.
\]
Since there is no linear dependence between the components of $\bs{T}(\bs{z})$, the Dirichlet distribution defines a minimal exponential family. In such a case,
the negative log-likelihood is known to be strictly convex \cite[Thm. 1.13]{brown1986fundamentals}, \cite{barndorff2014information}. As an alternative proof,
it is shown in \cite{ronning1989maximum}, that the Hessian of the negative log-likelihood is positive definite.
\end{proof}



\subsection{Existence of Bregman majorants}

Building upon the following lemma, we construct a Bregman tangent majorant of $f$.

\begin{lemma}[\cite{erdogan2002monotonic}]\label{lemma:specific_quad_maj}
    Let $\varphi$ be a twice-continuously differentiable function on $[0,+\infty)$.
    Assume that $\varphi''$ is decreasing on $[0,+\infty)$. Let $t\in [0,+\infty)$
    and let 
    \begin{equation}\label{eq:defcourb}
        c(t) = \begin{cases}
        \varphi''(0) & \text{if } t = 0,\\
        \displaystyle 2\frac{\varphi(0)-\varphi(t)+\varphi'(t)t}{t^2} & \text{otherwise.}
        \end{cases}
    \end{equation}
    Then, for every $x\in [0,+\infty)$, it holds
    \begin{equation}
    \varphi(x) \le \varphi(t) + \varphi'(t)(x-t)
    + \frac12 c(t) (x-t)^2.
    \end{equation}
\end{lemma}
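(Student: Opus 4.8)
The plan is to fix $t\in[0,+\infty)$ and to show that the \emph{error function}
\[
\psi(x) \;:=\; \varphi(t) + \varphi'(t)(x-t) + \tfrac12 c(t)(x-t)^2 - \varphi(x),
\qquad x\in[0,+\infty),
\]
is nonnegative. This $\psi$ is of class $C^2$, and directly from its definition $\psi(t)=0$ and $\psi'(t)=0$; moreover the formula \eqref{eq:defcourb} for $c(t)$ is chosen precisely so that $\psi(0)=0$ as well (when $t>0$; for $t=0$ this is automatic). The second derivative is $\psi''(x)=c(t)-\varphi''(x)$, which is \emph{nondecreasing} on $[0,+\infty)$ because $\varphi''$ is decreasing there. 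These three facts — two interior zeros of $\psi$, a vanishing first derivative at $t$, and a monotone second derivative — are the whole engine of the argument.

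The case $t=0$ is immediate: then $\psi''(x)=\varphi''(0)-\varphi''(x)\ge 0$ for all $x\ge 0$, so $\psi$ is convex on $[0,+\infty)$ with $\psi(0)=\psi'(0)=0$, hence $\psi\ge 0$.

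For $t>0$ I would first locate a sign change of $\psi''$. Since $\psi(0)=\psi(t)=0$, Rolle's theorem gives $\xi\in(0,t)$ with $\psi'(\xi)=0$; together with $\psi'(t)=0$, a second application of Rolle produces $\zeta\in(\xi,t)$ with $\psi''(\zeta)=0$. Because $\psi''$ is nondecreasing, $\psi''\le 0$ on $[0,\zeta]$ and $\psi''\ge 0$ on $[\zeta,+\infty)$. On $[\zeta,+\infty)$ the function $\psi$ is therefore convex, and since $t>\zeta$ and the tangent of $\psi$ at $t$ is the zero line ($\psi(t)=\psi'(t)=0$), convexity yields $\psi\ge 0$ on $[\zeta,+\infty)$, in particular $\psi(\zeta)\ge 0$. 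On $[0,\zeta]$ the function $\psi$ is concave, and its two endpoint values $\psi(0)=0$ and $\psi(\zeta)\ge 0$ are nonnegative, so $\psi$ dominates the (nonnegative) chord joining them and hence $\psi\ge 0$ on $[0,\zeta]$. Combining the two pieces gives $\psi\ge 0$ on all of $[0,+\infty)$.

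The one genuinely delicate step is the ``double Rolle'' that produces a zero $\zeta\in(0,t)$ of $\psi''$: this is what turns the mere monotonicity of $\psi''$ into the clean concave/convex splitting of the domain that drives the estimate. If one prefers an explicit witness instead, Taylor's formula with integral remainder gives $\varphi(0)-\varphi(t)+\varphi'(t)t=\int_0^t s\,\varphi''(s)\,ds$, whence $c(t)=\tfrac{2}{t^2}\int_0^t s\,\varphi''(s)\,ds\ge\varphi''(t)$ by monotonicity of $\varphi''$; this shows $\psi''(t)\ge 0$ directly, so one may take $\zeta$ to be the largest zero of $\psi''$ in $[0,t]$ and argue exactly as above. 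Everything else — the endpoint identities for $\psi$, the monotonicity of $\psi''$, and the ``graph above tangent / above chord'' inequalities for convex and concave functions — is routine.
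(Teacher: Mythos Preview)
Your argument is correct. Note, however, that the paper does not actually prove this lemma: it is stated with a citation to \cite{erdogan2002monotonic} and no proof is given, so there is no in-paper argument to compare against. Your double-Rolle construction of the inflection point $\zeta\in(0,t)$ of $\psi$, followed by the convex/concave splitting (tangent-below on $[\zeta,+\infty)$, chord-below on $[0,\zeta]$), is a clean self-contained proof; the alternative Taylor computation you sketch, $c(t)=\tfrac{2}{t^2}\int_0^t s\,\varphi''(s)\,ds\in[\varphi''(t),\varphi''(0)]$, is also correct and gives the sign of $\psi''$ at both endpoints directly.
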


\begin{proposition}\label{lemma:bregman_majorant_dirichlet}
Let $\bs{\beta} \in (0, +\infty)^d$. The negative log-likelihood in \eqref{eq:def_negative_log_likelihood_dirichlet} admits a Bregman tangent majorant at $\bs{\beta}$ associated with the Bregman function
\begin{equation}\label{eq:def_bregman_function_dirichlet}
 (\forall \bs{\alpha} = (\alpha_i)_{1\le i \le d} \in (0, +\infty)^d)\quad  h_{\bs{\beta}}(\bs{\alpha}) =  h^{{\rm cst}} (\bs{\alpha}) +h_{\bs{\beta}}^{{\rm var}} (\bs{\alpha})
\end{equation}
where $h^{{\rm cst}}$ and $h_{\bs{\beta}}^{{\rm var}}$ are respectively the constant and variable components defined as
\begin{equation}
\begin{cases}
 \displaystyle h^{{\rm cst}}(\bs{\alpha})  \coloneqq -  M\sum_{i=1}^d\ln\alpha_i\\
 \displaystyle h_{\bs{\beta}}^{{\rm var}}(\bs{\alpha})\coloneqq M \sum_{i=1}^d c(\beta_{i})\frac{\alpha_i^2}{2} ,
 \end{cases}
\end{equation}
where $c\colon [0, +\infty) \longrightarrow [0, +\infty)$ is defined by Lemma \ref{lemma:specific_quad_maj} applied to the function $\varphi=\ln \Gamma(\cdot +1)$.
\end{proposition}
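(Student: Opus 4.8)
The plan is to show that the function $q_{\bs{\beta}}$ built from $h_{\bs{\beta}}$ via \eqref{eq:bregman_majorant} majorizes $f$, i.e. that $D_f(\bs{\alpha},\bs{\beta}) \le D_{h_{\bs{\beta}}}(\bs{\alpha},\bs{\beta})$ for all $\bs{\alpha} \in \mathcal{D}$, and to identify the relevant ingredients explicitly. The starting observation is that $f$ decomposes as $f = M\,G + (\text{linear term})$, and the linear part contributes nothing to the Bregman divergence $D_f$; hence it suffices to prove $D_{MG}(\bs{\alpha},\bs{\beta}) \le D_{h_{\bs{\beta}}}(\bs{\alpha},\bs{\beta})$, i.e. $M\, D_G(\bs{\alpha},\bs{\beta}) \le D_{h_{\bs{\beta}}}(\bs{\alpha},\bs{\beta})$. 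Next I would split $G(\bs{\alpha}) = \sum_{i=1}^d \ln\Gamma(\alpha_i) - \ln\Gamma(\bar\alpha)$ with $\bar\alpha = \sum_i \alpha_i$, and observe that the term $-\ln\Gamma(\bar\alpha)$ is \emph{concave} in $\bs{\alpha}$ (as $\ln\Gamma$ is convex and $\bs{\alpha}\mapsto\bar\alpha$ is linear), so it contributes a nonpositive amount to $D_G$; thus $D_G(\bs{\alpha},\bs{\beta}) \le \sum_{i=1}^d D_{\ln\Gamma}(\alpha_i,\beta_i)$, and it remains to majorize each one-dimensional divergence $D_{\ln\Gamma}(\alpha_i,\beta_i)$.

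For the one-dimensional piece I would rewrite $\ln\Gamma(\alpha) = \ln\Gamma(\alpha+1) - \ln\alpha$, so that with $\varphi := \ln\Gamma(\,\cdot\,+1)$ one has $D_{\ln\Gamma}(\alpha,\beta) = D_{\varphi}(\alpha,\beta) + D_{-\ln(\cdot)}(\alpha,\beta)$. The term $D_{-\ln(\cdot)}(\alpha_i,\beta_i)$ is exactly what $h^{\rm cst}$ accounts for: $h^{\rm cst}(\bs{\alpha}) = -M\sum_i \ln\alpha_i$ gives $D_{h^{\rm cst}}(\bs{\alpha},\bs{\beta}) = M\sum_i D_{-\ln(\cdot)}(\alpha_i,\beta_i)$, so these cancel against the corresponding contribution and no bounding is needed there. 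For the remaining term $D_{\varphi}(\alpha_i,\beta_i)$ I would invoke Lemma~\ref{lemma:specific_quad_maj} applied to $\varphi = \ln\Gamma(\,\cdot\,+1)$: this requires checking that $\varphi$ is twice continuously differentiable on $[0,+\infty)$ and that $\varphi'' = \psi'$ (the trigamma function) is \emph{decreasing} on $[0,+\infty)$, which is a standard property of the trigamma function (it is completely monotone, hence in particular $\psi''<0$). The lemma then yields $\varphi(\alpha_i) \le \varphi(\beta_i) + \varphi'(\beta_i)(\alpha_i-\beta_i) + \tfrac12 c(\beta_i)(\alpha_i-\beta_i)^2$, i.e. $D_{\varphi}(\alpha_i,\beta_i) \le \tfrac12 c(\beta_i)(\alpha_i-\beta_i)^2$. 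Summing over $i$, multiplying by $M$, and recognizing that $\tfrac12 M \sum_i c(\beta_i)(\alpha_i-\beta_i)^2 = D_{h_{\bs{\beta}}^{\rm var}}(\bs{\alpha},\bs{\beta})$ (since $h_{\bs{\beta}}^{\rm var}$ is a positive-definite quadratic with constant Hessian $M\,\mathrm{diag}(c(\beta_i))$, whose Bregman divergence is the associated quadratic form), the three pieces assemble into exactly $D_{h_{\bs{\beta}}}(\bs{\alpha},\bs{\beta}) = D_{h^{\rm cst}}(\bs{\alpha},\bs{\beta}) + D_{h_{\bs{\beta}}^{\rm var}}(\bs{\alpha},\bs{\beta})$, giving the claimed majorization.

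The main obstacle — really the only nontrivial analytic input — is verifying the hypothesis of Lemma~\ref{lemma:specific_quad_maj}, namely that $\varphi'' = \psi'$ is decreasing on $[0,+\infty)$, equivalently $\psi''(t) \le 0$ for $t > 0$; I would cite the integral representation $\psi'(t) = \int_0^\infty \frac{s e^{-ts}}{1-e^{-s}}\,ds$ (or the series $\psi'(t) = \sum_{k\ge 0}(t+k)^{-2}$, which is manifestly decreasing in $t$) to settle this cleanly. A secondary, purely bookkeeping point is to confirm that the function $h_{\bs{\beta}}$ in \eqref{eq:def_bregman_function_dirichlet} genuinely qualifies as a Bregman function in the sense of Definition~\ref{de:Bregmaj} — lower semicontinuous, strictly convex, differentiable on $\inte(\dom h_{\bs{\beta}}) = (0,+\infty)^d \supseteq \mathcal{D}$ — which follows since $-M\sum_i\ln\alpha_i$ is strictly convex on $(0,+\infty)^d$ and $h_{\bs{\beta}}^{\rm var}$ is convex (strictly so when all $c(\beta_i)>0$, and in any case the sum is strictly convex). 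Everything else is routine computation with Bregman divergences and the additive/linear-invariance properties already recorded after the definition of $D_h$.
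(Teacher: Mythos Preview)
Your proposal is correct and follows essentially the same route as the paper: the paper decomposes $f=f_1+f_2+f_3$ with $f_1$ the concave part (linear term plus $-M\ln\Gamma(\bar\alpha)$), $f_2=M\sum_i\ln\Gamma(\alpha_i+1)$ handled via Lemma~\ref{lemma:specific_quad_maj}, and $f_3=-M\sum_i\ln\alpha_i=h^{\rm cst}$ handled exactly, which is precisely your decomposition phrased in terms of tangent majorants rather than Bregman divergences. One small slip: $\varphi''(t)=\psi'(t+1)$, not $\psi'(t)$, so the relevant series is $\sum_{k\ge 0}(t+1+k)^{-2}$ (as the paper writes), though of course this is equally manifestly decreasing.
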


\begin{proof}  
Recalling that $\ln \Gamma (\cdot+1) = \ln \Gamma + \ln$, we decompose $f$ into
\begin{equation}
    f(\bs{\alpha}) = f_1(\bs{\alpha}) + f_2(\bs{\alpha}) + f_3(\bs{\alpha}),
\end{equation}
where 
\begin{align}
    f_1(\bs{\alpha}) &= -\sum_{m = 1}^M  \left(\sum_{i=1}^d (\alpha_i -1) \ln z_{m,i} + \ln \Gamma\left(\sum_{i=1}^d\alpha_i\right) \right),\\
  f_2(\bs{\alpha}) &=   M\sum_{i=1}^d \ln \Gamma(\alpha_i+1), 
\end{align}
and 
\begin{equation}
    f_3(\bs{\alpha}) = -M\sum_{i=1}^d \ln\alpha_i.
\end{equation}
We now derive upper tangent bounds at $\bs{\beta}$ for each of these functions. Recalling that the Gamma function is log-convex, $f_1$ is the sum of a linear function and a concave function, and the following tangent inequality holds
\begin{equation}\label{eq:majorant_f1}
    f_1(\bs{\alpha}) \leq f_1(\bs{\beta}) + \nabla f_1(\bs{\beta})^\top (\bs{\alpha}-\bs{\beta}).
\end{equation}
In addition, the function $\ln \Gamma (\cdot +1)$ is twice-continuously differentiable on $(0, +\infty)^d$ and its second derivative, which is a shifted version of the Trigamma function, admits the following expansion
\begin{equation}
(\forall t \in \R)\quad
(\ln \Gamma (\cdot +1))''(t) = 
\sum_{\ell=0}^{+\infty } \frac{1}{(t + 1 +\ell)^2},
\end{equation}
provided that $-t\not\in \mathbb{N}$.
This shows that $(\ln \Gamma (\cdot +1))''$ is decreasing on $[0,+\infty)$. Therefore, Lemma~\ref{lemma:specific_quad_maj} applies to $\ln \Gamma (\cdot +1)$, yielding the upper-bound
\begin{equation}
    f_2(\bs{\alpha}) \leq f_2(\bs{\beta}) + \nabla f_2(\bs{\beta})^\top (\bs{\alpha}-\bs{\beta}) + \frac{M}{2 }\sum_{i=1}^d c(\beta_i) (\alpha_i-\beta_i)^2,
\end{equation}
which can also be written as
\begin{equation}\label{eq:majorant_f2}
    f_2(\bs{\alpha}) \leq f_2(\bs{\beta}) + \nabla f_2(\bs{\beta})^\top (\bs{\alpha}-\bs{\beta}) + D_{h_{\bs{\beta}}^{\text{var}}}(\bs{\alpha}, \bs{\beta}).
\end{equation}
Finally, by definition of the Bregman divergence and since $f_3$ coincides with $h^{\text{cst}}$, we obtain
\begin{equation}\label{eq:majorant_f3}
    f_3(\bs{\alpha}) = f_3(\bs{\beta}) + \nabla f_3(\bs{\beta})^\top (\bs{\alpha}-\bs{\beta}) + D_{h^{\text{cst}}}(\bs{\alpha}, \bs{\beta}).
\end{equation}
Thus, since $ D_{h^{\text{cst}} + h_{\bs{\beta}}^{\text{var}}}= D_{h^{\text{cst}}}+ D_{h_{\bs{\beta}}^{\text{var}}}$, we deduce from \eqref{eq:majorant_f1}, \eqref{eq:majorant_f2}, and \eqref{eq:majorant_f3} that
\begin{equation}
    f(\bs{\alpha}) \leq f(\bs{\beta}) + \nabla f(\bs{\beta})^\top (\bs{\alpha}-\bs{\beta}) + D_{h_{\bs{\beta}}}(\bs{\alpha}, \bs{\beta}).
\end{equation}
\end{proof}

\begin{remark}\label{rem:separable_bregman_form_dirichlet}
    The majorants established in Proposition~\ref{lemma:bregman_majorant_dirichlet} can be written in the form \eqref{eq:separable_form_bregman}, with for all $i \in \{1, \dots, d\}$, for all $\bs{\beta}\in (0, +\infty)^d$, $a_i(\bs{\beta})=M$ and $b_i(\bs{\beta})=M c(\beta_i)$, and for all $t\in (0, +\infty)$, $\nu_i(t) = -\ln t$.
Additionally, the curvature function $c$ defined in \eqref{eq:defcourb} is represented in Figure \ref{fig:curvature_dirichlet}. It satisfies $c(0)= \pi^2/6$ and
\begin{equation}
   (\forall t \in [0, +\infty))\quad c(t) >0 ~\text{and}~\lim_{t \rightarrow +\infty} c(t)=0.
\end{equation}
The family of Bregman functions defined in equation \eqref{eq:def_bregman_function_dirichlet}  satisfies the assumptions of Proposition~\ref{prop:subsequential_conv} for suitable choices of function $g$. For all $ \boldsymbol{\beta} \in (0, +\infty)^d $, the function $ h_{\boldsymbol{\beta}} $ is strictly convex and differentiable on its domain which is 
$\mathcal{D} = (0, +\infty)^d$. Moreover, the majorization property is verified as per Proposition~\ref{lemma:bregman_majorant_dirichlet}, thereby satisfying Assumption~\ref{assumption:bregman_functions}. Regarding Assumptions~\ref{assumption:lower_bound_euclidean} and \ref{assumption:uniform_continuity}, they are clearly met as indicated in the remarks made after stating these assumptions.
\end{remark}

\begin{figure}[htb]
    \centering
    \includegraphics{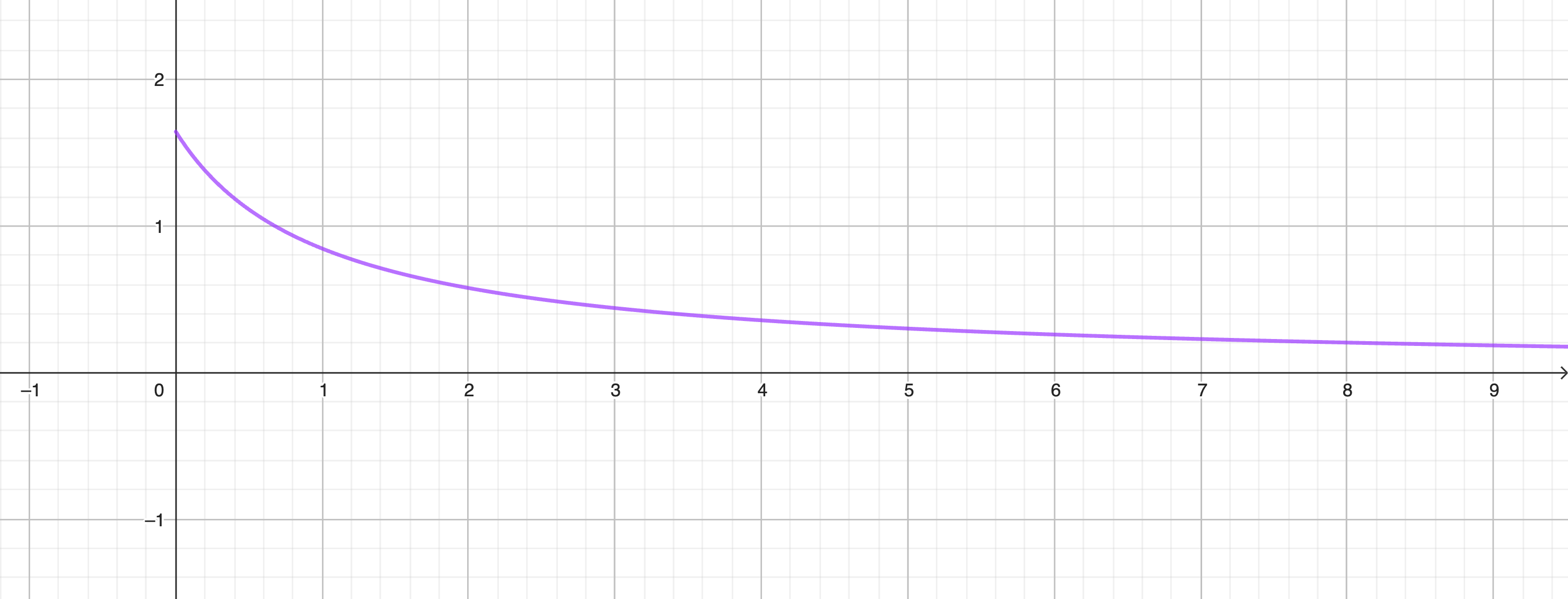}
    \caption{Curvature function \eqref{eq:defcourb} when $\varphi=\ln\Gamma(\cdot+1)$.}
    \label{fig:curvature_dirichlet}
\end{figure}

\subsection{Algorithm}

For a given value of $\bs{\beta}=(\beta_i)_{1\le i \le d}\in (0, +\infty)^d$, we can find a closed-form expression of the minimizer $\widehat{\bs{\alpha}}$ of the Bregman majorant given by Proposition  \ref{lemma:bregman_majorant_dirichlet}.
Since the majorant has a separable form, each component
$\widehat{\alpha}_i$, $i\in \{1,\ldots,d\}$,
of $\widehat{\bs{\alpha}}$ is the minimizer of the function defined on $(0,+\infty)$ as
\[
\alpha_i \mapsto \left((\ln \Gamma)'(\beta_i)
-(\ln \Gamma)'\Big(\sum_{j=1}^d\beta_j\Big)
-\frac{1}{M}\sum_{m=1}^M \ln z_{m,i}\right)
(\alpha_i-\beta_i)+
\frac{c(\beta_i)}{2}  (\alpha_i-\beta_i)^2
- \ln \alpha_i+\frac{\alpha_i-\beta_i}{\beta_i},
\]
where, as already mentioned, the curvature $c(\beta_i)$ is positive.
We deduce that $\widehat{\alpha}_{i}$ is the unique positive root of the second order polynomial equation
\begin{equation}
c(\beta_{i}) \alpha_{i}^2
+ \delta_{i}(\boldsymbol{\beta})
\alpha_{i} = 1,
 \end{equation}
where $\delta_{i}(\boldsymbol{\beta})$ is given by
\begin{equation}
\delta_{i}(\bs{\beta})  := (\ln \Gamma)'(\beta_{i}+1)
- (\ln \Gamma)'\Big(\sum_{j=1}^d \beta_{j}\Big)
-c(\beta_{i}) \beta_{i}
- \frac{1}{M}\sum_{m=1}^M \ln z_{m,i}.
\end{equation}
Therefore, we obtain the following closed-form expression for the minimizer:
\begin{equation}\label{eq:update_MM_alpha}
\widehat{\alpha}_{i}
= \frac{-\delta_{i}(\bs{\beta})+
\sqrt{\big(\delta_{i}(\bs{\beta})\big)^2
+ 4 c(\beta_{i})}}
{2c(\beta_{i})}.
\end{equation}
The final Variable Bregman MM updates are described in Algorithm \ref{algo:VBMM_dirichlet}. Note that it only requires the evaluation of the log-Gamma function and of its derivative,
the Digamma function.

\begin{center}
\RestyleAlgo{ruled}
	\begin{algorithm} 
Initialize $\boldsymbol{\alpha}^{(0)} \in (0, +\infty)^d$.\\
\For{$\ell = 0, 1, \ldots,$}
{\For{$i \in \{1, \dots, d\}$}{
$\displaystyle
c_i^{(\ell)} = 2\left(-\ln \Gamma(\alpha_{i}^{(\ell)} +1)+\alpha_{j}^{(\ell)}(\ln\Gamma)'(\alpha_{i}^{(\ell)}+1)\right)/(\alpha_{i}^{(\ell)})^2$\\
$\displaystyle
\delta_{i}^{(\ell)} = 
(\ln \Gamma)'(\alpha_{i}^{(\ell)}+1) - (\ln\Gamma)'\left(\sum_{j=1}^d \alpha_{j}^{(\ell)} \right)
-c_i^{(\ell)} \alpha_{i}^{(\ell)}
-\frac{1}{M}\sum_{m=1}^M  \ln z_{m,i}.$\\
$\displaystyle
\alpha_{i}^{(\ell+1)}
= \left(-\delta_{i}^{(\ell)}+
\sqrt{(\delta_{i}^{(\ell)})^2
+ 4 c_i^{(\ell)}} \right)\Big/
2c_i^{(\ell)}.$
}}
\caption{VBMM for Dirichlet parameter estimation\label{algo:VBMM_dirichlet}}
\end{algorithm}
\end{center}

\subsection{Numerical experiments}

All numerical experiments were run in Python 3.8 on an Apple M1 CPU.

\subsubsection{Unconstrained case}

We first consider the case when $f$ is defined by\eqref{eq:def_negative_log_likelihood_dirichlet} and the penalty function $g$
is zero. The cost function thus reduces to $F=f$.
We apply the VBMM algorithm  (see Algorithm \ref{algo:VBMM_dirichlet}) to minimize $F$. 

We compare the computational efficiency of VBMM with two state-of-the-art algorithms used to solve the Dirichlet maximum-likelihood problem. 
The first one is the algorithm described in \cite{ronning1989maximum,narayanan1991algorithm,minka2000estimating}, which is a Newton algorithm adapted to deal with the constraint $\bs{\alpha} \in (0, +\infty)^d$ and requires computing the second derivative of the log-Gamma function at each iteration. The second one, proposed in \cite{minka2000estimating}, constructs an upper bound on the negative log-likelihood, which is tight at the current iterate. Unlike ours, this upper bound has no closed-form minimizer. However, the sub-optimization problem can be rewritten as a fixed-point problem, which can be solved with a Newton method separately on each component. Finally, we highlight the advantages of using Bregman functions that adapt dynamically to the iterates by comparing the Variable Bregman Majorization-Minimization (VBMM) algorithm with the fixed-metric Bregman Majorization-Minimization (BMM) approach. In BMM, the variable component $\bs{\alpha} \mapsto h_{\bs{\beta}}^{\text{var}}(\bs{\alpha})$ is replaced with a function independent of $\bs{\beta}$, namely
\begin{equation}
    \bs{\alpha} \mapsto M \sup_{t \in [0, +\infty)} \left\{ c(t) \right\} \sum_{i=1}^d  \frac{\alpha_i^2}{2} =  \frac{M\pi^2}{12}
     \sum_{i=1}^d  \alpha_i^2.
\end{equation}

\paragraph{Experimental setup}

We conduct a series of experiments where we set the dimension $d$of the data to $1000$ and the number of samples $M$ to $500$ across all experiments. We sample from a Dirichlet distribution with parameter $\bs{\alpha}_{\text{true}} \in \R^d$ for different values of $\bs{\alpha}_{\text{true}}$, allowing us to scan a wide range of means and variances for the data distribution. We first define the following three vectors: 
\begin{itemize}
    \item $\bs{m}_1 = \bs{1}_N$;
    \item $\bs{m}_2 = (m_{2,i})_{1 \leq i \leq d}$ where, for all $i \in \{1, \dots, d\}$, $m_{2,i}= 10$ if $i = 1$, $m_{2,i}=1$ otherwise;
    \item $\bs{m}_3 = (m_{3,i})_{1 \leq i \leq d}$ where, for all $i \in \{1, \dots, d\}$, $m_{3,i}= i$.
\end{itemize}
For $j \in \{1, 2, 3\}$, we define the normalized vector $\bar{\bs{m}}_j = \bs{m}_j / \sum_{i=1}^d m_{j, i} \in \Delta_d$. 
We also introduce three scaling factors $s_1 = 100$, $s_2 = 10$, and $s_3 = 1$. Given $\bar{\bs{m}} \in \{\bar{\bs{m}}_1, \bar{\bs{m}}_2, \bar{\bs{m}}_3\}$ and $s \in \{s_1, s_2, s_3\}$, we set the true parameter $\bs{\alpha}_{\text{true}}$ as
\begin{equation}
    \bs{\alpha}_{\text{true}} = s \bar{\bs{m}}.
\end{equation}
In this case, the mean of the Dirichlet distribution is $\bar{\bs{m}}$, and the variance is $\frac{\bar{\bs{m}}(1-\bar{\bs{m}})}{s+1}$. Therefore, a large value of $s$ results in a small variance of the samples.

Once $\bs{\alpha}_{\text{true}}$ is set, we draw our data samples from the Dirichlet distribution with the chosen true parameter. Subsequently, we aim to estimate the Dirichlet parameter solution to the Maximum Likelihood problem. For each method, the initial estimate $\alpha^{(0)}$ is uniformly set to a vector with value $10\, \mathbf{1}_N$. Each experimental setup was averaged over $1000$ random experiments.

\paragraph{Results}

To assess the convergence speed of all three methods on the experimental setup previously described, we evaluate the relative squared error (RSE) with respect to the optimal parameter $\bs{\alpha}_{\text{opt}}$ at each iteration $\ell$ against time in seconds. The RSE at iteration $\ell$ is given by
$
    \frac{\|\bs{\alpha}^{(\ell)} - \bs{\alpha}_{\text{opt}}\|^2}{\|\bs{\alpha}_{\text{opt}}\|^2}
$
where $\bs{\alpha}^{(\ell)}$ is the estimated parameter at iteration $\ell$.
Note that, as the number of samples is finite, 
the maximizer of the likelihood
$\bs{\alpha}_{\text{opt}}$ differs from $\bs{\alpha}_\text{true}$ in general.

In Figure \ref{fig:comparison_dim_1000}, we present convergence plots displaying the distance to the optimum versus time for different values of $\bs{\alpha}_\text{true}$. We observe that, in every configuration, VBMM converges faster than the three other methods.

\begin{figure}[H]
\captionsetup{skip=0pt, justification=raggedright}%
\tabcolsep=2pt
\begin{tabularx}{\textwidth}{cXXX}
& \multicolumn{1}{c}{$s_1$} &\multicolumn{1}{c}{$s_2$} & \multicolumn{1}{c}{$s_3$} \\
\raisebox{2.25cm}{\rotatebox[origin=center]{90}{$\bs{m}_1$}} &
 \includegraphics[width=0.33\textwidth]{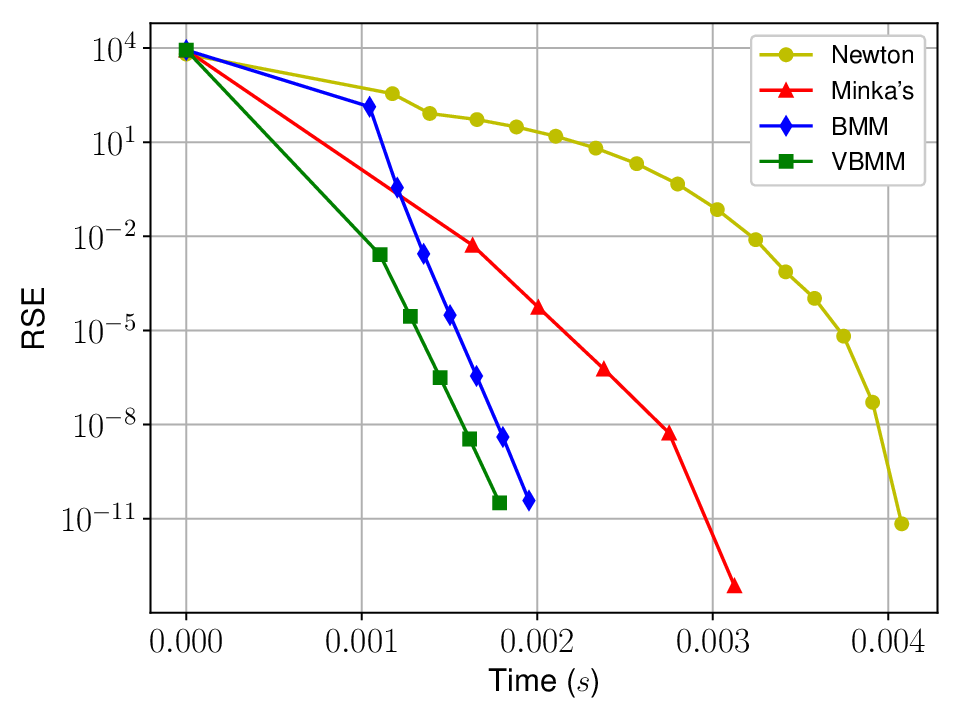} & \includegraphics[width=0.33\textwidth]{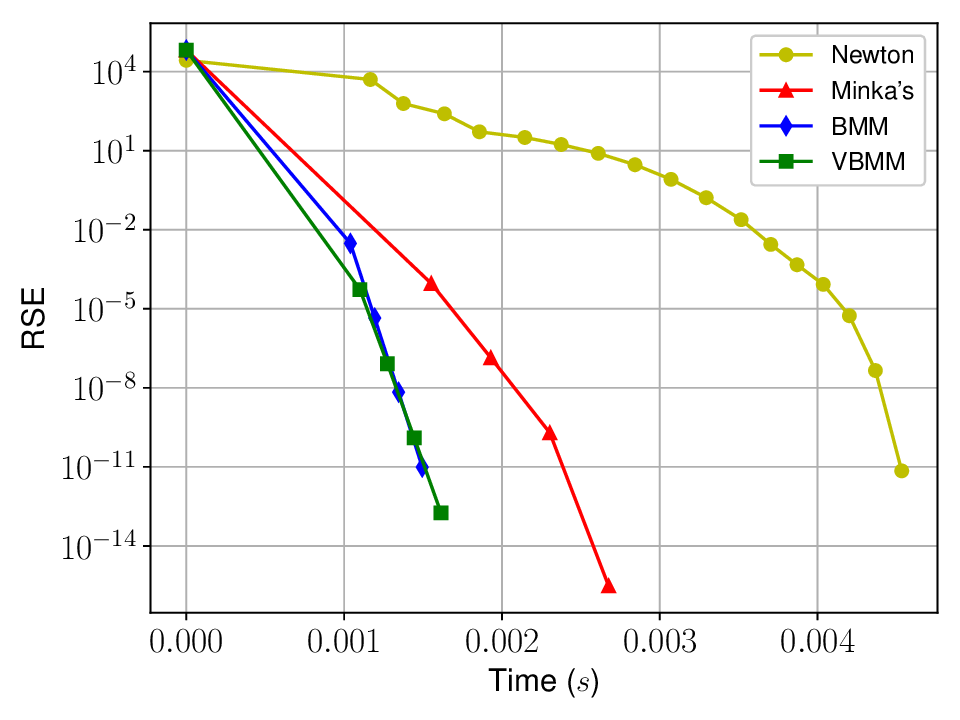} &
 \includegraphics[width=0.33\textwidth]{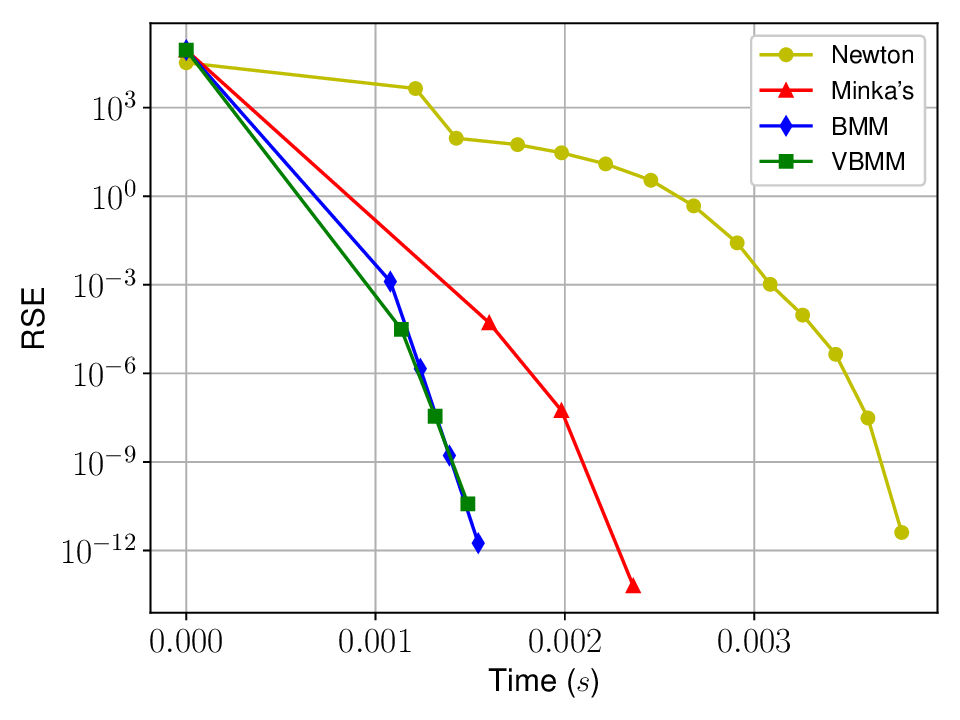}\\
 \raisebox{2.5cm}{\rotatebox[origin=center]{90}{$\bs{m}_2$}} &
 \includegraphics[width=0.33\textwidth]{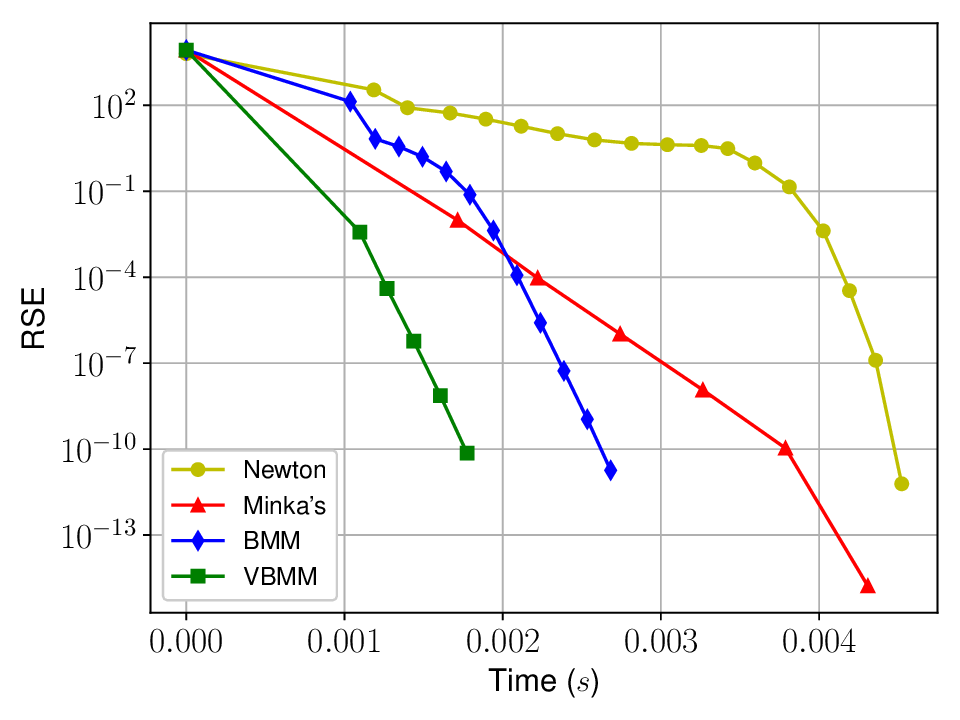} & \includegraphics[width=0.33\textwidth]{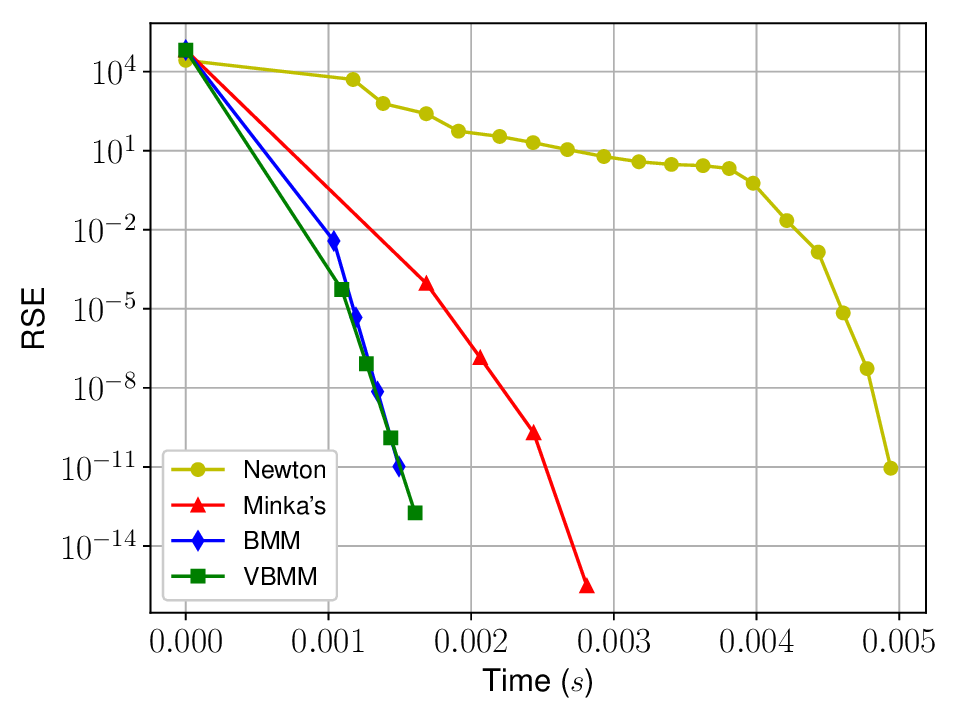} &
 \includegraphics[width=0.33\textwidth]{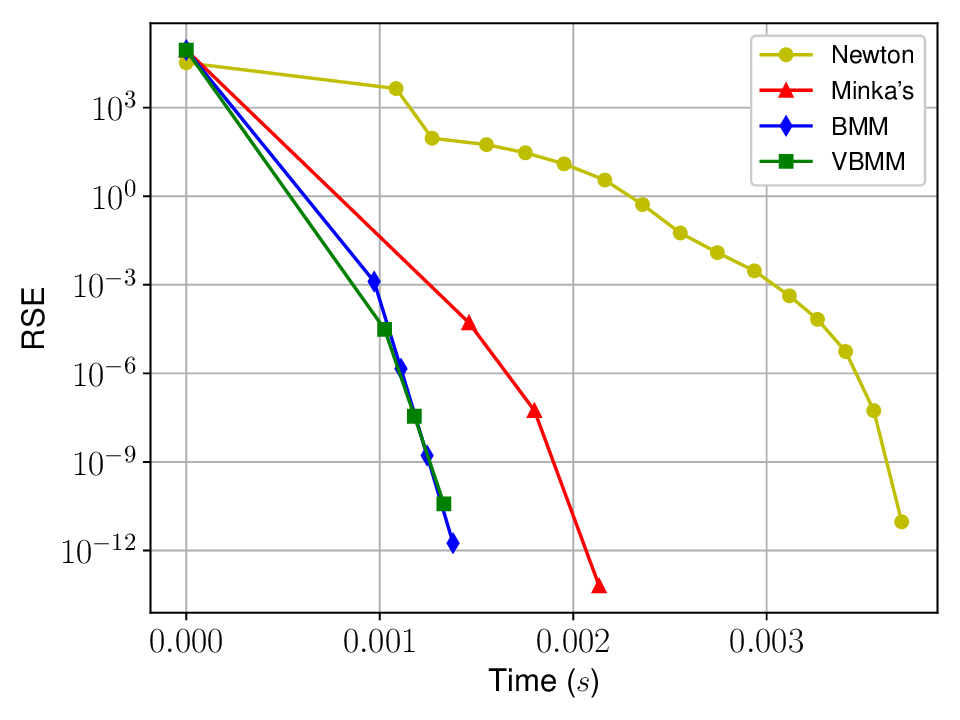}\\
 \raisebox{2.25cm}{\rotatebox[origin=center]{90}{$\bs{m}_3$}} &
 \includegraphics[width=0.33\textwidth]{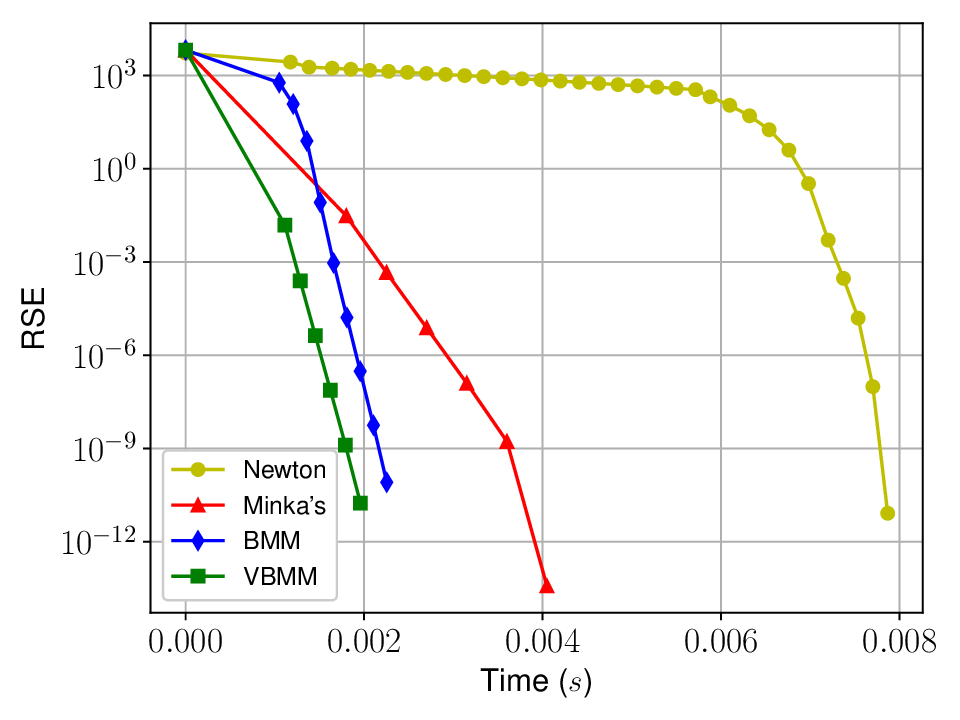} & \includegraphics[width=0.33\textwidth]{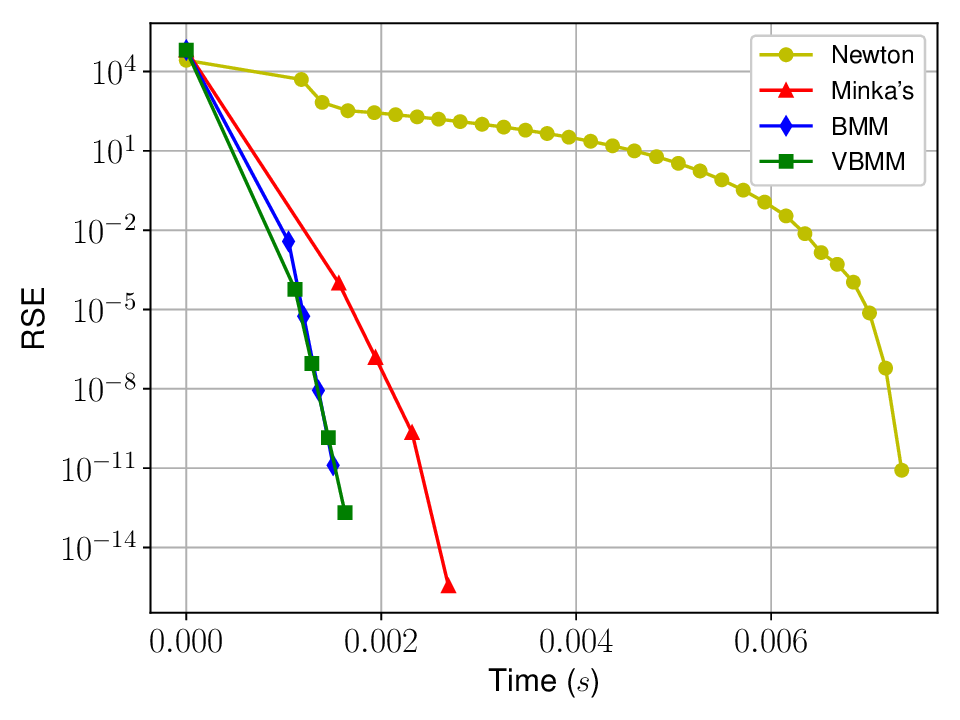} &
 \includegraphics[width=0.33\textwidth]{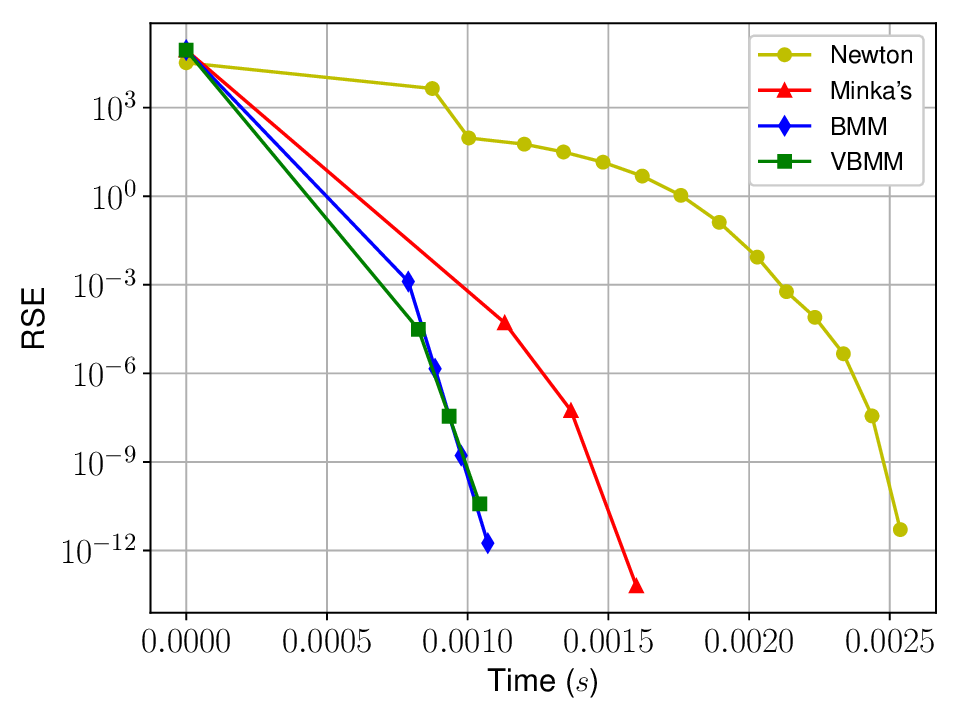}
\end{tabularx}
\caption{Distance to optimum versus time for different values of $\bs{\alpha}_\text{true}$, with $M=500$ samples and sample size $d=1000$. Rows from top to bottom: $\bs{\alpha}_\text{true}$ defined with respectively $\bs{m}_1$, $\bs{m}_2$, and $\bs{m}_3$. Columns from left to right: $\bs{\alpha}_\text{true}$ defined with respectively $s_1$, $s_2$, and $s_3$.}
\label{fig:comparison_dim_1000}
\end{figure}

\subsubsection{Use of a separable constraint}

Our approach also allows for the inclusion of a non-zero function $g$, which can be useful for enforcing constraints on the estimated Dirichlet parameters. Specifically, we consider a function having the following separable form:
\[
(\forall \bs{x} = (x_i)_{1\le i \le d}\in \R^d)\quad 
g(\bs{x}) := \sum_{i=1}^d \iota_{[r_i^{-}, r_i^{+}]}(x_i),
\]
where for each $i \in \{1, \dots, d\}$, $(r_i^{-}, r_i^{+}) \in (0, +\infty)^2$. 

An example of such a constraint arises frequently in the Latent Dirichlet Allocation (LDA) problem, where setting $r_i^{-}=\epsilon$ with $\epsilon>0$ a constant arbitrarily close to zero, and $r_i^{+} = 1$ is common. In practice, this constraint reflects the sparsity of word counts in documents.

In this case, minimizing the Bregman majorant simply reduces to minimizing a convex one-variable function on $[r_i^{-}, r_i^{+}]$, which is equivalent to projecting the unconstrained update onto $[r_i^{-}, r_i^{+}]$. The complete algorithm is detailed in Algorithm \ref{algo:VBMM_dirichlet_constraint}. In our experiments, we set $r_i^{-}=10^{-10}$ and $r_i^{+} = 1$, for all $i\in \{1, \dots, d\}$.

In Figure \ref{fig:VBMM_with_constraint}, we plot both the RSE and the negative log-likelihood evaluated at the iterates of VBMM as a function of time. As theoretically expected, the loss is monotonically decreasing (linearly) and converges fast. 
Note that neither of the two alternative methods we compared against previously are designed to handle constrained problems, highlighting an additional benefit from our approach.

\begin{center}
\RestyleAlgo{ruled}
	\begin{algorithm} 
Initialize $\boldsymbol{\alpha}^{(0)} \in (0, +\infty)^d$.\\
\For{$\ell = 0, 1, \ldots,$}
{\For{$i \in \{1, \dots, d\}$}{
$\displaystyle
c_i^{(\ell)} = 2\left(\ln \Gamma(1)-\ln \Gamma(\alpha_{i}^{(\ell)} +1)+(\ln\Gamma)'(\alpha_{i}^{(\ell)}+1)\alpha_{i}^{(\ell)}\right)/(\alpha_{i}^{(\ell)})^2$\\
$\displaystyle
\delta_{i}^{(\ell)} = 
(\ln \Gamma)'(\alpha_{i}^{(\ell)}+1) - (\ln\Gamma)'\left(\sum_{j=1}^d \alpha_{j}^{(\ell)} \right)
-c_i^{(\ell)} \alpha_{i}^{(\ell)}
- \frac{1}{M}\sum_{m=1}^M  \ln z_{m,i}$\\
$\displaystyle
\alpha_{i}^{(\ell+1)}
= \mathrm{proj}_{[r_i^{-}, r_i^{+}]}\; \left((-\delta_{i}^{(\ell)}+
\sqrt{(\delta_{i}^{(\ell)})^2
+ 4 c_i^{(\ell)}} )\Big/
2c_i^{(\ell)}\right).$
}}
\caption{VBMM for Dirichlet parameter estimation with a box constraint\label{algo:VBMM_dirichlet_constraint}}
\end{algorithm}
\end{center}

\begin{figure}[H]
    \centering
    \includegraphics[width=0.43\linewidth]{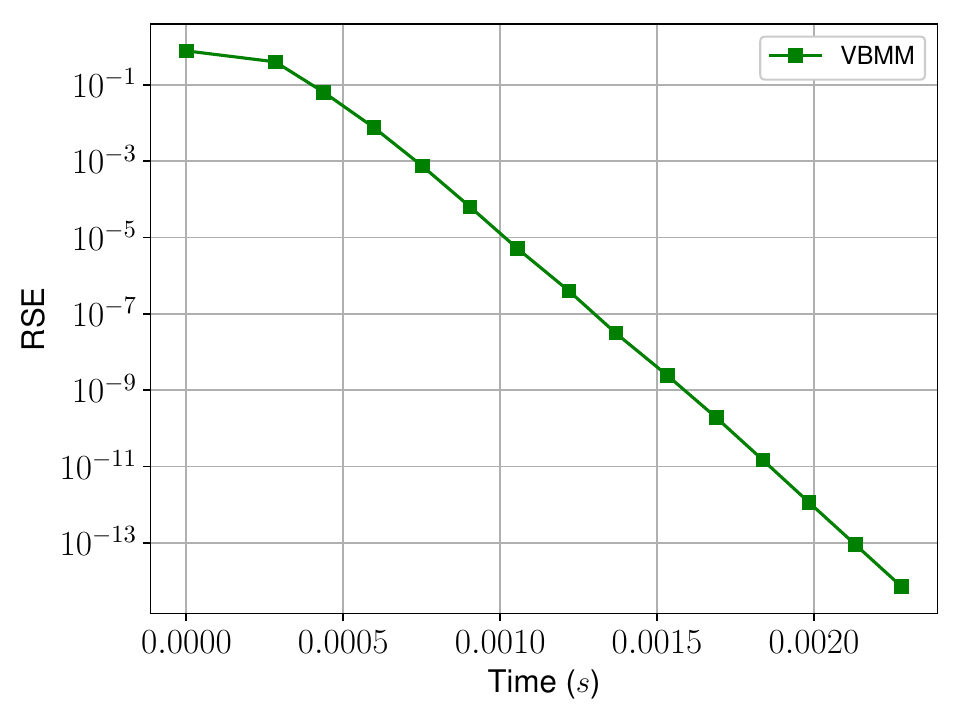}
    \includegraphics[width=0.43\linewidth]{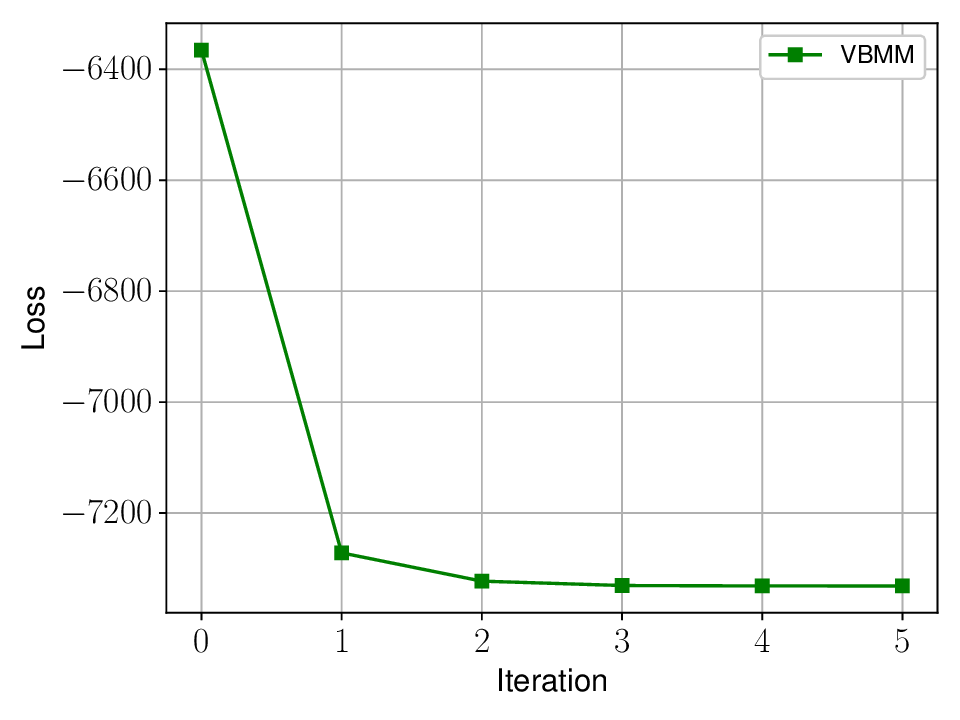}
    \caption{(Left) RSE versus lapsed time. (Right) Function $f$ minus the optimal value $f^*$ versus elapsed time. The loss is averaged over 1000 experiments where $\bs{\alpha}_{\text{true}}$ is uniformly sampled in $[0, 2]^{1000}$. The number of samples for each experiment is $M=500$.
    }
    \label{fig:VBMM_with_constraint}
\end{figure}

\section{Conclusion}

We have introduced the Variable Bregman Majorization-Minimization (VBMM) algorithm as a versatile extension of the Bregman Proximal Gradient method. By allowing the Bregman divergence to adapt dynamically at each iteration, VBMM increases flexibility in optimization and achieves faster convergence. We established the subsequential convergence of the algorithm under mild assumptions on the family of Bregman metrics. A novel application to the estimation of Dirichlet distribution parameters demonstrated the practical efficiency of VBMM, outperforming existing methods in terms of convergence speed.

Future research could focus on strengthening our convergence results. For instance, leveraging the Kurdyka-\L{}ojasiewicz property, as in \cite{teboulle2018simplified}, might lead to stronger convergence results. Moreover, relaxing the convexity requirement on the objective function $f$ could broaden the applicability of VBMM to a wider class of optimization problems.

\vskip 6mm
\noindent{\bf Acknowledgments}

\noindent  The authors thank Michael Adipoetra for his help on the numerical experiments of this paper.
S\'egol\`ene Martin was supported by the DFG funded Cluster of Excellence EXC 2046 MATH+ (project ID: AA5-8).

\bibliographystyle{ieeetr}

\bibliography{biblio}  

\end{document}